\RequirePackage{fix-cm}
\documentclass[12pt]{article}
\PassOptionsToPackage{table,dvipsnames}{xcolor}

\usepackage[left=2.5cm,
right=2.5cm,
top=2.3cm,
bottom=2.3cm,
headheight=20pt,
headsep=10pt,
footskip=25pt,
letterpaper]{geometry}
\usepackage[utf8]{inputenc}
\usepackage[T1]{fontenc}
\usepackage[english]{babel}
\usepackage{amsmath,amsfonts,amssymb,amsthm,thmtools}
\usepackage{graphicx}
\usepackage{hyperref}
\usepackage{fancyhdr}
\usepackage[normalem]{ulem}
\usepackage{graphicx}
\usepackage{stfloats}
\usepackage{wrapfig}
\usepackage{epstopdf}
\usepackage{cleveref}
\usepackage{subfloat}
\usepackage{subcaption}
\usepackage{xspace}
\usepackage{enumitem}
\usepackage{listings}
\usepackage{titlesec}
\usepackage{etoolbox}
\usepackage{setspace}
\usepackage{changepage}
\usepackage{etoolbox}
\usepackage{multirow}
\usepackage{booktabs}
\usepackage{tabularx}
\usepackage{wrapfig}
\usepackage{svg}
\usepackage[percent]{overpic}
\usepackage[round]{natbib}
\usepackage[colorinlistoftodos, shadow,color=blue!30!white
]{todonotes}
\usepackage{xpatch}
\usepackage{siunitx}

\renewcommand{\sfdefault}{phv}

\fancypagestyle{first}{\fancyfoot[R]{\small\thepage}}

\setlist[itemize]{leftmargin=1em,itemsep=0ex,topsep=0ex}
\titlespacing*{\paragraph}{0pt}{0ex plus .1ex}{1ex}
\titlespacing*{\section}{0ex}{2.3ex plus .3ex minus .0ex}{.6ex plus .3ex minus .2ex}
\titlespacing*{\subsection}{0ex}{1.5ex plus .3ex minus .5ex}{.4ex plus .2ex minus .1ex}
\titlespacing*{\subsubsection}{0ex}{1.2ex plus .3ex minus .3ex}{.3ex plus .2ex minus .2ex}

\xapptocmd\normalsize{%
\abovedisplayskip=.8em plus .2em minus .2em
\belowdisplayskip=.6em plus .1em minus .1em
\abovedisplayshortskip=.8em plus .2em minus .2em
\belowdisplayshortskip=.6em plus .1em minus .1em
}{}{}

\setcitestyle{numbers}
\renewcommand{\cite}[1]{\citep{#1}}

\definecolor{mydarkblue}{rgb}{0.0,0.15,0.7}
\hypersetup{%
colorlinks=true,
linkcolor=mydarkblue,
citecolor=mydarkblue,
filecolor=mydarkblue,
urlcolor=mydarkblue}

\makeatletter

  \renewcommand{\maketitle}{%
    \begingroup
      {\centering\LARGE\@title\par}%
      \vskip 1em
      \centering
      \begin{tabular}[t]{@{}c@{}}\strut\@author\strut\end{tabular}%
      \vskip 0.3in minus 0.1in
    \endgroup
  }
\makeatother

\usepackage{amsmath,amsfonts,bm}

\def\eqref#1{equation~\ref{#1}}

\def\1{\bm{1}}

\DeclareMathAlphabet{\mathsfit}{\encodingdefault}{\sfdefault}{m}{sl}
\SetMathAlphabet{\mathsfit}{bold}{\encodingdefault}{\sfdefault}{bx}{n}

\renewcommand{\eqref}[1]{\textup{(\ref{#1})}}

\usepackage{amsmath,amssymb}
\usepackage{graphicx}
\usepackage{wrapfig}
\usepackage{booktabs}
\usepackage{multirow}
\usepackage{tabularx}
\usepackage[table]{xcolor}
\usepackage{hyperref}
\usepackage{url}

\usepackage{amsthm}
\usepackage{longtable}
\usepackage{placeins}
\newtheorem{applemma}{Lemma}[section]
\newtheorem{apptheorem}[applemma]{Theorem}
\newtheorem{appproposition}[applemma]{Proposition}
\newtheorem{appcorollary}[applemma]{Corollary}

\newcommand{\method}{\textsc{ICLR}}

\definecolor{oursrow}{RGB}{245,245,252}

\definecolor{groupgray}{RGB}{243,243,243}
\definecolor{gainred}{RGB}{190,45,45}
\definecolor{gaingreen}{RGB}{35,150,85}
\definecolor{algblue}{RGB}{25,55,220}
\definecolor{algred}{RGB}{225,35,25}
\newcommand{\goodgain}[1]{\hspace{2pt}{\scriptsize\textcolor{gaingreen}{(#1)}}}
\newcommand{\badgain}[1]{\hspace{2pt}{\scriptsize\textcolor{gainred}{(#1)}}}
\newcommand{\neutralgain}[1]{\hspace{2pt}{\scriptsize\textcolor{gray}{(#1)}}}

\newcolumntype{Y}{>{\centering\arraybackslash}X}
\title{When Can Agents Forget Their Reasoning? ICLR for Long-Horizon Agent Context Compression}
\hypersetup{pdftitle={When Can Agents Forget Their Reasoning? ICLR for Long-Horizon Agent Context Compression}}

\author{%
  \mbox{Mingxuan Wang\textsuperscript{1}}\quad \mbox{Fei Luo\textsuperscript{1}}\quad \mbox{Bo Wang\textsuperscript{1}}\quad \mbox{Guorun Yao\textsuperscript{1}}\quad \mbox{Yinglong Guo\textsuperscript{1}}\\[2pt]
  \mbox{Chao Ning\textsuperscript{1}}\quad \mbox{Hongyue Chen\textsuperscript{1}}\quad \mbox{Yanbiao Ma\textsuperscript{2,*}}\quad \mbox{Jungong Han\textsuperscript{3,*}}\\[4pt]
  \textsuperscript{1}TierFlow Team\\
  \textsuperscript{2}Gaoling School of Artificial Intelligence, Renmin University of China\\
  \textsuperscript{3}Tsinghua University\\[3pt]
  \textsuperscript{*}Corresponding authors.\quad \href{mailto:ybma1998@ruc.edu.cn}{\texttt{ybma1998@ruc.edu.cn}}%
}
\hypersetup{pdfauthor={Mingxuan Wang, Fei Luo, Bo Wang, Guorun Yao, Yinglong Guo, Chao Ning, Hongyue Chen, Yanbiao Ma, Jungong Han}}

\date{}

\usepackage{team-template}
\renewcommand{\TeamPaperID}{ICLR / LONG-HORIZON AGENTS}
\renewcommand{\TeamShortTitle}{When Can Agents Forget Their Reasoning?}
\begin{document}
\pagestyle{fancy}

\maketitle
\thispagestyle{first}
\suppressfloats[t]

\begin{abstract}
Long horizon language model agents continually accumulate reasoning history, increasing context length and inference cost even after earlier decisions have been executed and observed. Unlike static Chain of Thought compression, removing historical reasoning can change future actions and the resulting interaction trajectory. We study when such reasoning can be safely forgotten. We propose \textbf{Interaction Aware Compression for Long Horizon Reasoning (ICLR)}, a training free online method that ranks reasoning blocks using frozen proxy entropy while preserving actions, tool calls, and observations. On 260 WorkBuddyBench tasks, ICLR improves average reward from $0.699$ to $0.718$, while reducing input, output, and cache read tokens by $25.5\%$, $14.4\%$, and $33.3\%$, respectively. Ablations reveal trajectory amplification, where local reasoning deletion produces nonlinear changes in total computation by altering subsequent interaction. Representation probing, activation patching, and controlled trajectory analyses further suggest that historical reasoning becomes more replaceable once task relevant derived state has been reliably externalized into code, files, tool outputs, or environmental feedback. These results characterize agent reasoning as dynamic working state rather than permanent interaction history.
\end{abstract}

\begin{figure*}[t]
    \centering
    \includegraphics[width=\textwidth]{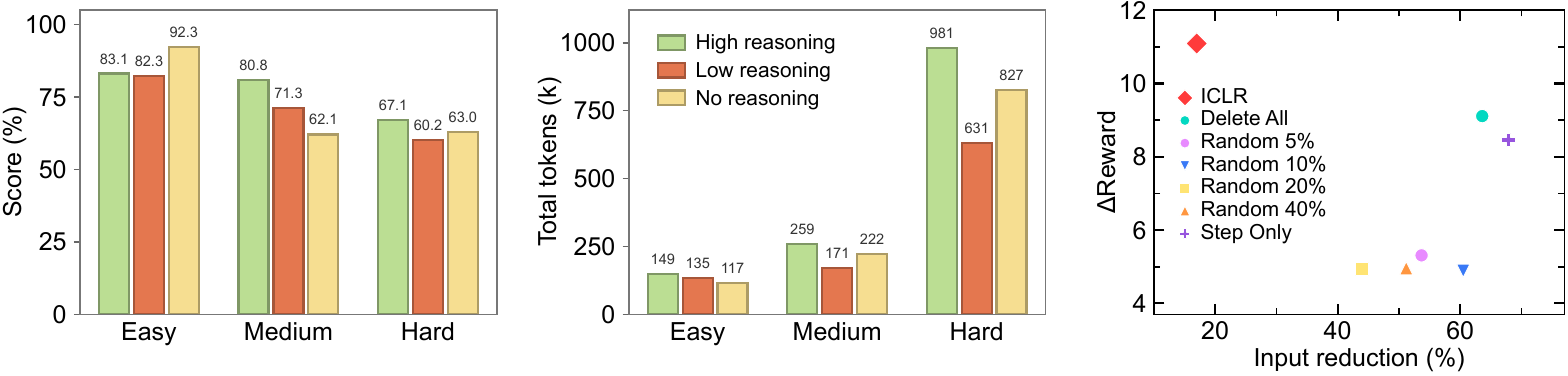}
    
    \caption{
    \textbf{Reasoning utility and cost are state dependent.}
    \textbf{(a,b)} Scores and token usage vary across reasoning settings and task difficulty.
    \textbf{(c)} Online interventions show a nonmonotonic reward--efficiency tradeoff.
    See Appendices~\ref{app:motivation} and~\ref{app:ablations}.
    }
    \label{fig:motivation}
    
\end{figure*}

\section{Introduction}
\label{sec:introduction}

Large language models are increasingly used in complex agent tasks, where reasoning, action, and environmental feedback form a repeated interaction loop~\citep{yao2022react,shinn2023reflexion,park2023generative}. As a task progresses, earlier reasoning is repeatedly included in later requests, increasing context length and inference cost~\citep{kang2025acon,wei2026pace,wu2026contextbudget}. Meanwhile, new observations continuously change the task state. This raises a central question for agent context management: once reasoning has produced an action and the environment has returned feedback, does that reasoning still need to remain in the active context?

Figure~\ref{fig:motivation} motivates this question from two observations. The most effective reasoning setting varies with task difficulty, and more reasoning does not consistently yield higher scores. Our online interventions also show a nonmonotonic relation among reasoning removal, reward, and input reduction. Together, these observations suggest that the value of reasoning depends on the current state rather than on reasoning quantity alone.

Prior work has shown that explicit Chain of Thought reasoning contains substantial redundancy~\citep{wei2022chain,wang2022self,li2026making}, and existing methods can shorten intermediate reasoning while preserving answer quality~\citep{li2026making,li2026chain,qiao2025concise,xia2025tokenskip}. These settings are largely static because a completed reasoning trace can be modified and then evaluated against the same final answer. Agent reasoning is different. Changing reasoning at step $t$ may alter the next tool call, observation, and subsequent decisions. Reasoning compression for agents therefore intervenes in a closed loop decision process rather than simply modifying text~\citep{li2026self,wang2026swe,li2026sculptor}.

Reasoning can move across different forms of state. A plan, constraint, or intermediate conclusion may exist only in the reasoning trace, but become available through tool calls, code, files, or environmental feedback. Context management and memory systems use persistent or external state to reduce dependence on the active context~\citep{packer2023memgpt,hu2026sam,li2026acm,lin2026context}. We therefore hypothesize that the future necessity of reasoning depends on the current agent state and on whether task relevant information has already been externalized.

We propose \textbf{Interaction Aware Compression for Long Horizon Reasoning (ICLR)}, a training free compression method for long horizon agents. After each interaction step, a frozen proxy model estimates the predictive entropy of reasoning blocks and preferentially compresses low entropy blocks~\citep{li2026making}. The compressed history is directly used by the next agent request. Across long horizon agent tasks from multiple domains, ICLR improves reward from $0.699$ to $0.718$, while reducing input, output, and cache read tokens by $25.5\%$, $14.4\%$, and $33.3\%$.

Our ablations reveal that local deletion and total system computation are not proportional. Small changes to reasoning history can produce much larger changes in token consumption, while removing all reasoning does not necessarily produce the shortest trajectory because local interventions alter later decisions. We refer to this effect as \emph{trajectory amplification}.

To understand which reasoning should remain available, we further study its \emph{future necessity}. Using representation probing techniques~\citep{alain2016understanding,hewitt2019designing,belinkov2022probing}, we find that the hidden state at the current context boundary contains decodable information associated with later rederivation, additional reasoning, and replanning, reaching an AUROC of $0.844$ under strict task level nested evaluation. Activation patching provides complementary evidence~\citep{meng2022locating}: replacing deleted history representations with their full history counterparts progressively restores the perturbed next token distribution, with KL Repair increasing from $0.047$ at layer 4 to $0.459$ at layer 20, $0.724$ at layer 28, and $0.978$ at layer 31.

Trajectory analysis further suggests an \emph{internalization and externalization gap}. When task specific derived state remains available only in reasoning, future behavioral risk is $67.3\%$, compared with $36.2\%$ after that state has been externalized. Controlled interventions support the same interpretation: keeping only the latest reasoning preserves the full history rule score, deleting it reduces the score, clearing reasoning after relevant code has been written leaves the final score unchanged, and hiding observed environmental feedback causes the agent to recover the same information through an additional interaction.

Together, these findings suggest that historical reasoning is most valuable when it remains the only reliable carrier of task relevant derived state. As that state is externalized, the original reasoning becomes increasingly replaceable.

Our contributions are threefold.

\textbf{(1) Training free online reasoning compression for agents.}
We introduce ICLR, which continuously compresses accumulated reasoning inside a real agent interaction loop while preserving actions, tool calls, and observations.

\textbf{(2) Trajectory level characterization of agent reasoning compression.}
Through full benchmark evaluation and systematic ablations, we show that reasoning compression can alter the execution trajectory, producing a nonlinear relation between local deletion and total system computation.

\textbf{(3) Analysis of future reasoning necessity.}
We study historical reasoning through hidden state probing, activation patching, trajectory analysis, and information externalization. The results support a view of agent reasoning as dynamic working state whose future value depends on where task relevant information is stored.

\section{Related Work}
\label{sec:related_work}

\subsection{Efficient Reasoning and Chain of Thought Compression}

Extended Chain of Thought reasoning has become a central mechanism for mathematical reasoning, code generation, and complex decision making, but longer reasoning traces also increase token usage, latency, and inference cost~\citep{wei2022chain,wang2022self,hao2024training,zelikman2024quiet}. A broad line of work therefore studies how much explicit reasoning is actually necessary, including adaptive reasoning budgets, early stopping, concise reasoning, and direct pruning of intermediate reasoning steps~\citep{li2026making,li2026chain,qiao2025concise,sui2026think,xia2025tokenskip}. These studies collectively suggest that reasoning length and task performance are not simply proportional. A complete reasoning trace often mixes critical deductions with repeated checks, confirmations, and procedural elaboration. Identifying which parts can be removed without harming downstream behavior has therefore become an important direction in efficient reasoning.

Recent work further estimates redundancy at the level of individual reasoning steps. Step Entropy~\citep{li2026making} ranks reasoning steps according to uncertainty in the predictive distribution and shows that many low entropy steps can be removed from mathematical reasoning traces while preserving final answer accuracy. CONCISE~\citep{qiao2025concise}, TokenSkip~\citep{xia2025tokenskip}, and related compression methods similarly reduce explicit reasoning by identifying steps or tokens that contribute less to the final solution. These methods mainly optimize reasoning within a single completed generation. The decision to remove a step is therefore evaluated against a fixed downstream answer rather than an evolving sequence of actions and observations.

Our setting differs because historical reasoning remains part of the agent's future decision state. Removing a reasoning block can change the next action, which changes the next observation and all later decisions. The relevant objective is therefore not only whether compressed reasoning still supports the same answer, but whether repeated compression remains effective throughout a closed loop interaction.

\subsection{Context Management and Memory for Language Model Agents}

The growth of interaction history has motivated context management methods for long horizon agents~\citep{liu2024lost,chen2024longlora,munkhdalai2024leave,xiao2024efficient,zhang2023h2o}. Token level prompt compression methods such as LLMLingua~\citep{jiang2023llmlingua}, LongLLMLingua~\citep{jiang2024longllmlingua}, LLMLingua-2~\citep{pan2024llmlingua}, context compression~\citep{li2023compressing}, and gist tokens~\citep{mu2023learning} reduce redundant context. Agent oriented approaches manage evolving histories through relevance estimation, summarization, adaptive pruning, explicit context operations, or external memory~\citep{sun2025scaling,wu2025resum,ye2025agentfold,verma2026active}. Representative examples include ACON~\citep{kang2025acon}, PACE~\citep{wei2026pace}, SWE Pruner~\citep{wang2026swe}, Self Compact~\citep{li2026self}, Self GC~\citep{hao2026self}, Sculptor~\citep{li2026sculptor}, ContextBudget~\citep{wu2026contextbudget}, Context as a Tool~\citep{liu2026context}, ARC~\citep{yao2026arc}, and structured context eviction~\citep{semenov2026beyond}.

A complementary line of work uses persistent or external memory to reduce dependence on the active context. MemGPT~\citep{packer2023memgpt}, SAM~\citep{hu2026sam}, CoMem~\citep{zhang2026comem}, ACM~\citep{li2026acm}, Mem1~\citep{zhou2026mem1}, proactive memory extraction~\citep{yang2026beyond}, and transferable agent memory~\citep{liang2026learning} preserve or recover information outside the immediate interaction window. These systems ask what information should remain accessible. We instead ask whether reasoning that has already produced an action still needs to remain after its consequences have been observed.

This distinction follows from the information flow of an agent. Reasoning combines available evidence into plans and intermediate conclusions, while actions and observations can move the same task relevant information into code, files, tool outputs, or environmental feedback~\citep{lin2026context,liu2026context,yao2026arc}. Historical reasoning may therefore change from being the only carrier of useful derived state to being redundant with information already stored elsewhere. Prior work on hidden state probing~\citep{alain2016understanding,hewitt2019designing,belinkov2022probing,burns2022discovering} and activation interventions~\citep{meng2022locating} provides tools for studying this transition. We use them to examine whether the current model state predicts future reasoning reuse and whether reasoning becomes more replaceable after task relevant state is externalized.

\section{Methodology}
\label{sec:method}

\begin{figure*}[t]
    \centering
    \includegraphics[width=0.98\textwidth]{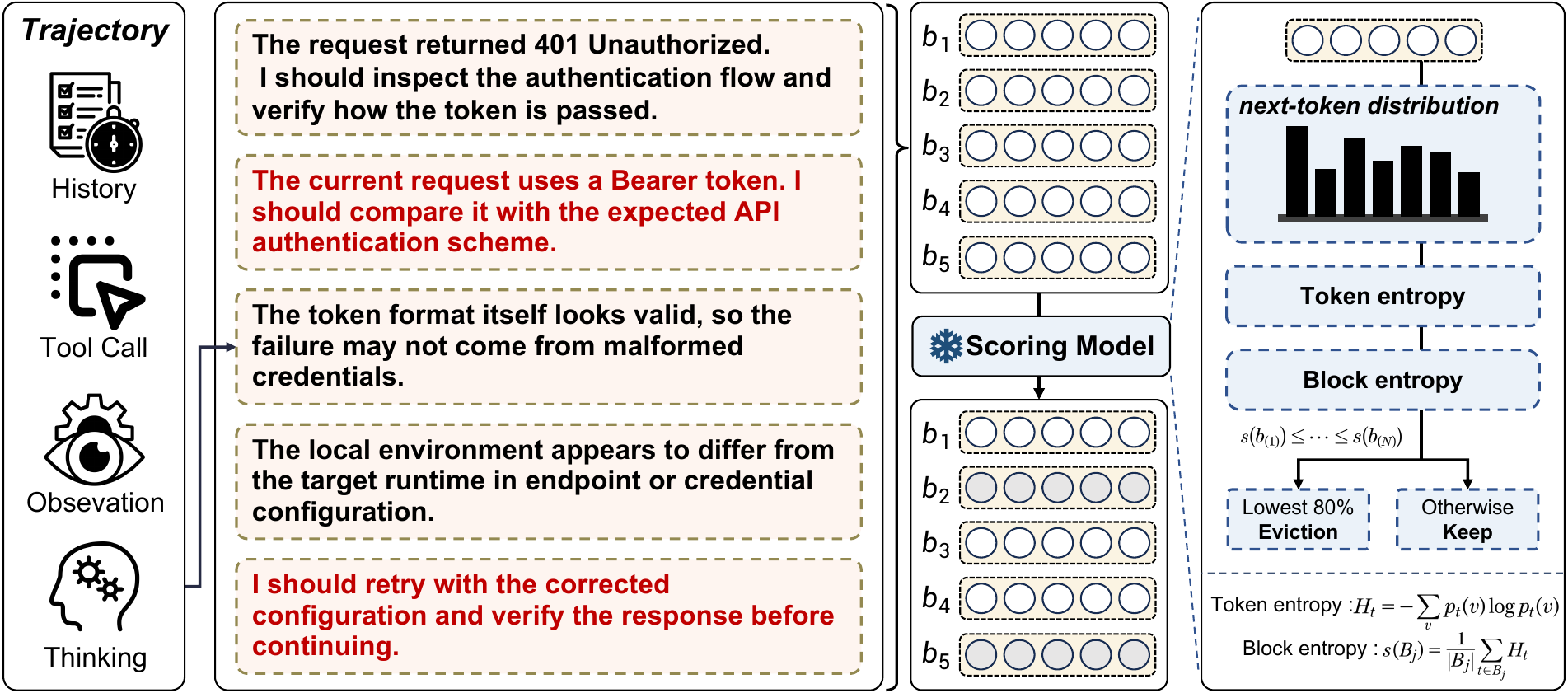}
    
    \caption{
    \textbf{Overview of ICLR.}
    At each completed agent step, newly generated reasoning is partitioned into blocks while actions, tool calls, and observations are preserved.
    A frozen proxy scorer estimates token level predictive entropy for each block.
    Low entropy blocks are removed before the compressed history is written back for the next agent request.
    }
    \label{fig:overview}
    
\end{figure*}

Figure~\ref{fig:overview} summarizes the online compression process.
ICLR operates inside the interaction loop rather than on a completed trajectory.
It reduces accumulated reasoning while preserving the external records that define the evolving task state.
We first describe online compression and entropy based selection, then the analyses used to study when historical reasoning remains useful.

\subsection{Online Reasoning Compression}
\label{sec:online_compression}

We consider a long horizon agent that repeatedly reasons, acts, and receives environmental feedback.
Let the $t$th interaction step be
\begin{equation}
S_t=(R_t,A_t,O_{t+1}),
\end{equation}
where $R_t$ is the generated reasoning, $A_t$ the resulting action or tool call, and $O_{t+1}$ the returned observation.
Before the $k$th model request, the interaction history is
\begin{equation}
\mathcal H_k=(P,S_1,\ldots,S_{k-1}),
\end{equation}
where $P$ contains the system prompt, tool definitions, and task instruction.

Static Chain of Thought compression modifies a completed reasoning trace before evaluating its final answer.
Online compression instead changes the state from which future actions are generated.
For the $k$th request, ICLR induces
\begin{equation}
\mathcal H_k
\rightarrow
\widetilde{\mathcal H}_k
\rightarrow
A_k
\rightarrow
O_{k+1}
\rightarrow
\mathcal H_{k+1}.
\end{equation}
A local compression decision can therefore affect later tool selection, observations, recovery, and termination.

The first agent request is uncompressed.
Before each subsequent request, ICLR compresses only the reasoning from the just completed interaction step while preserving its action, tool call, and observation.
The compressed reasoning is written back into persistent history and is not restored, so all later model calls operate on the modified trajectory.

Local text removal is not equivalent to total computation saved.
A small change to reasoning history can alter later decisions and therefore change both the number and content of future requests.
Local deletion should thus be distinguished from total computation over the resulting trajectory.

\subsection{Entropy Guided and Context Conditioned Pruning}
\label{sec:entropy_pruning}

For each completed reasoning trace, we partition the reasoning into blocks,
\begin{equation}
R_t=(B_1,B_2,\ldots,B_N).
\end{equation}
The current implementation uses double newline boundaries as a deterministic segmentation rule and excludes empty blocks.
Compression is applied only to reasoning.
Actions, tool arguments, tool outputs, observations, and final answers remain unchanged.
This design prevents the compression policy from directly deleting evidence that has already entered the external interaction record.

\paragraph{Proxy entropy.}
We score each reasoning block using a frozen proxy language model~\citep{li2026making}.
Given scorer context $c_t$, let $z(c_t)$ denote the logits predicting
the next token. The predictive distribution is
\begin{equation}
p_\theta(v\mid c_t)
=
\operatorname{softmax}\!\left(z(c_t)\right)_v.
\end{equation}
The token level predictive entropy is
\begin{equation}
h_t
=
-\sum_{v\in\mathcal V}
p_\theta(v\mid c_t)
\log_2 p_\theta(v\mid c_t),
\end{equation}
and the score of block $B_i$ is the mean entropy of its tokens,
\begin{equation}
H(B_i)
=
\frac{1}{|B_i|}
\sum_{t\in B_i} h_t.
\end{equation}

Lower entropy indicates greater predictive certainty under the frozen proxy model.
We use this \emph{proxy entropy} to rank reasoning blocks for compression.
The acting agent is DeepSeek V4 Flash, while the scorer is a frozen Qwen3.5 9B model.

\paragraph{Context conditioned ranking.}
A reasoning block is not scored in isolation.
The scorer receives the system prompt, tool definitions, accumulated interaction history, and the current completed step in their original causal order.
The score can therefore be viewed as a context dependent quantity,
\begin{equation}
H(B_i\mid\mathcal H_t),
\end{equation}
so the same reasoning text may receive a different score under a different trajectory state.
This is important in agent settings because the relevance of earlier reasoning can change after new actions and observations have modified the task state.

The full context scorer has a $42$K token input limit.
If this limit is exceeded, pruning is skipped for that step rather than truncating the history observed by the acting agent.

\paragraph{Block selection.}
Given $N$ reasoning blocks, ICLR removes the lowest entropy fraction,
\begin{equation}
\mathcal D_t
=
\operatorname{BottomK}_{\lfloor \rho N \rfloor}
\left(
\{H(B_i)\}_{i=1}^{N}
\right),
\qquad
\rho=0.8.
\end{equation}
Each block in $\mathcal D_t$ is replaced with \texttt{[SKIP]}, while all retained blocks preserve their original order and text.
When $N=1$, no block is removed.
Because the rule operates on blocks rather than individual tokens, the realized token reduction varies across interaction steps.

We use a fixed compression ratio rather than tuning a task specific threshold.
This choice creates a consistent intervention across trajectories and allows us to study whether agents can repeatedly discard a large fraction of historical reasoning during real interaction.

\paragraph{Intervention variants.}
We construct three additional variants for controlled comparison.
\emph{Step Only Entropy} scores only the reasoning from the current completed step and removes earlier trajectory context from the scorer.
\emph{Delete All Thinking} removes the complete targeted reasoning trace while preserving actions and observations.
\emph{Random $r$} removes a deterministic random subset of reasoning tokens with
\begin{equation}
r\in\{5\%,10\%,20\%,40\%\}.
\end{equation}
All variants follow the same online write back protocol.
The modified history is therefore consumed by every subsequent real agent request.
These interventions separate the effects of context conditioning, selection strategy, and deletion strength.

\subsection{Future Necessity Probing}
\label{sec:future_necessity_method}

The compression policy determines which reasoning is removed, but it does not explain which reasoning will matter again later.
We therefore study whether the current model state contains information associated with future reasoning reuse before that later behavior occurs.
Following standard representation probing methods~\citep{alain2016understanding,hewitt2019designing,belinkov2022probing}, we analyze the model state available at each compression boundary.

We use a frozen Qwen3.5 9B model as a representation sensor.
For sample $i$, layer $l$, and pooling rule $p$, let
$\mathcal Z_i^{(l)}$ denote the hidden states associated with the current context boundary.
We define the pooled representation as
\begin{equation}
z_i^{(l,p)}
=
\operatorname{Pool}_p
\left(
\mathcal Z_i^{(l)}
\right).
\end{equation}
A linear probe then predicts a behavioral target $y_i$,
\begin{equation}
\widehat p_i
=
\sigma
\left(
w^\top z_i^{(l,p)}+b
\right).
\end{equation}

The target records whether the later trajectory contains rederivation, additional reasoning, or explicit replanning.
We use this behavioral target as an operational measure of future reasoning reuse.
The probe characterizes this signal from the current representation.

To prevent task leakage, evaluation uses strict task level nested splits.
Layer selection, pooling choice, feature standardization, and PCA are selected using training tasks only.
The outer evaluation folds contain held out tasks.
This protocol asks whether the current representation contains information that generalizes across tasks rather than information specific to trajectories already seen during probe selection.
Full extraction and evaluation details are reported in Appendix~\ref{app:probing}.

\subsection{Mechanistic Interventions for Reasoning Persistence}
\label{sec:mechanistic_interventions}

Representation probing tests whether future reasoning reuse can be predicted from the current state.
We complement this analysis with interventions that examine how historical reasoning affects model predictions and how its role changes after task relevant information becomes available outside the reasoning trace.

\paragraph{Activation patching.}
We first test whether representations conditioned on historical reasoning participate in the next token prediction of the frozen sensor model~\citep{meng2022locating}.
For sample $i$, let $P_i^{\rm full}$ denote the next token distribution under the full reasoning history and $P_i^{\rm del}$ the corresponding distribution after historical reasoning is removed.
At layer $l$, we replace the prompt boundary residual under the deleted history condition with the residual from the full history condition.
The resulting distribution is denoted by $P_{i,l}^{\rm patch}$.

Let
\begin{equation}
d_i
=
D_{\rm KL}
\left(
P_i^{\rm full}
\Vert
P_i^{\rm del}
\right).
\end{equation}
For samples with $d_i>0$, we measure distributional recovery as
\begin{equation}
\operatorname{Repair}_i(l)
=
1-
\frac{
D_{\rm KL}
\left(
P_i^{\rm full}
\Vert
P_{i,l}^{\rm patch}
\right)
}{
d_i
}.
\end{equation}
Larger values indicate that the patched representation more closely recovers the full history next token distribution.
The intervention is performed only on the frozen sensor model and does not modify the online ICLR policy.

\paragraph{State externalization.}
We next examine whether reasoning becomes more replaceable after task specific derived state has been written into a persistent external carrier.
Let $\mathcal U_i$ denote the unexternalized task specific derived state at compression boundary $i$.
We define
\begin{equation}
G_i
=
\mathbb{I}
\left(
\mathcal U_i\neq\varnothing
\right).
\end{equation}
Here, derived state refers to an intermediate plan, relation, constraint, synthesis, or task specific conclusion rather than a direct copy of the task instruction or a raw observation.
External carriers include code, files, tool outputs, and explicit environmental feedback.

The indicator $G_i$ separates states in which reasoning remains the only available carrier of task relevant derived information from states in which that information has already been materialized elsewhere.
This allows us to study whether future reasoning reuse changes as information moves from internal reasoning into persistent external state.

\paragraph{Controlled state interventions.}
Finally, we construct fixed state continuations that modify the available reasoning or feedback while keeping the surrounding task state unchanged.
The interventions compare retaining or deleting recent reasoning, clearing reasoning after relevant code has been written to disk, and retaining or hiding previously observed environmental feedback.

These interventions test whether the next decision still depends on information available only in the reasoning trace.
Together, probing, activation patching, state externalization analysis, and controlled interventions provide complementary views of reasoning persistence.
The full protocols are reported in Appendices~\ref{app:patching} and~\ref{app:externalization}.

\section{Experiments}
\label{sec:experiments}

We evaluate \method{} on WorkBuddyBench using DeepSeek V4 Flash as the task executing agent and a frozen Qwen3.5 9B model as the proxy entropy scorer. WorkBuddyBench contains long horizon tasks from Code, Office, Security, and Web domains. We report official task reward together with input tokens, output tokens, cache read tokens, and wins, ties, and losses. The full benchmark contains 260 tasks, consisting of 80 Code, 50 Office, 60 Security, and 70 Web tasks. A fixed 80 task set with 20 tasks from each domain is used for ablation analysis.

\subsection{Overall Performance}

We first compare \method{} with representative context management baselines on the fixed Pilot40 task set used by prior WorkBuddyBench context management experiments. Table~\ref{tab:pilot40} includes simple sliding window baselines, periodic summarization, and representative context management methods including PACE, SelfCompact, ACON Core, SAM, and SWE Pruner. This comparison focuses on the tradeoff between task quality and token consumption. \method{} directly prunes low entropy reasoning from the online interaction history using a frozen scorer, providing a lightweight alternative to summary based or external memory based context management.

\begin{table*}[htbp]
\centering
\caption{
\textbf{Comparison with context management methods on WorkBuddyBench Pilot40.}
All methods use the same fixed 40 tasks. Scores are multiplied by $100$.
Parenthesized values show changes from the DeepSeek-V4-Flash base agent.
}
\label{tab:pilot40}

\small
\setlength{\tabcolsep}{2.4pt}
\renewcommand{\arraystretch}{1.10}

\begin{tabular*}{\textwidth}{@{\extracolsep{\fill}}l|ccccc|c@{}}
\toprule
Method & Code & Office & Sec. & Web & Avg. & Tokens $\downarrow$ \\
\midrule

\rowcolor{groupgray}
\multicolumn{7}{c}{\textbf{Base and Ours}} \\
\midrule

DeepSeek-V4-Flash
& 72.9
& 81.6
& 44.6
& 71.0
& 67.5
& 1.89M
\\

\rowcolor{oursrow}
\textbf{\method{} (Ours)}
& 74.1 \goodgain{+1.2}
& 81.1 \badgain{-0.5}
& \textbf{64.1} \goodgain{+19.5}
& 62.0 \badgain{-9.0}
& \textbf{70.3} \goodgain{+2.8}
& 2.02M \badgain{+6.9\%}
\\

\midrule
\rowcolor{groupgray}
\multicolumn{7}{c}{\textbf{Simple Context Baselines}} \\
\midrule

Sliding Window ($K=5$)
& 66.8 \badgain{-6.1}
& 67.4 \badgain{-14.2}
& 32.3 \badgain{-12.3}
& 59.0 \badgain{-12.0}
& 56.4 \badgain{-11.2}
& 2.95M \badgain{+56.3\%}
\\

Sliding Window ($K=10$)
& 74.4 \goodgain{+1.4}
& 73.1 \badgain{-8.6}
& 40.1 \badgain{-4.5}
& 61.0 \badgain{-10.0}
& 62.1 \badgain{-5.4}
& 1.98M \badgain{+5.1\%}
\\

Sliding Window ($K=20$)
& 79.3 \goodgain{+6.4}
& 85.1 \goodgain{+3.5}
& 40.3 \badgain{-4.3}
& 70.0 \badgain{-1.0}
& 68.7 \goodgain{+1.1}
& 2.02M \badgain{+7.0\%}
\\

Periodic Summary ($n=3$)
& 65.4 \badgain{-7.5}
& 67.7 \badgain{-14.0}
& 26.3 \badgain{-18.3}
& \textbf{82.7} \goodgain{+11.7}
& 60.5 \badgain{-7.0}
& 2.21M \badgain{+17.2\%}
\\

Periodic Summary ($n=5$)
& \textbf{83.6} \goodgain{+10.7}
& 75.7 \badgain{-5.9}
& 40.0 \badgain{-4.5}
& 81.3 \goodgain{+10.3}
& \textbf{70.2} \goodgain{+2.6}
& 2.37M \badgain{+25.5\%}
\\

\midrule
\rowcolor{groupgray}
\multicolumn{7}{c}{\textbf{API-based Methods}} \\
\midrule

PACE~\citep{wei2026pace}
& 70.4 \badgain{-2.6}
& 55.8 \badgain{-25.9}
& 43.5 \badgain{-1.0}
& 64.0 \badgain{-7.0}
& 58.4 \badgain{-9.1}
& 1.10M \goodgain{-41.6\%}
\\

LLMLingua-2~\citep{pan2024llmlingua}
& 64.6 \badgain{-8.3}
& 73.3 \badgain{-8.3}
& 40.1 \badgain{-4.5}
& 70.0 \badgain{-1.0}
& 62.0 \badgain{-5.5}
& 2.40M \badgain{+27.3\%}
\\

SelfCompact~\citep{li2026self}
& 76.8 \goodgain{+3.9}
& 79.6 \badgain{-2.0}
& 36.2 \badgain{-8.4}
& 71.0 \neutralgain{0.0}
& 65.9 \badgain{-1.6}
& 1.57M \goodgain{-17.0\%}
\\

ACON-Core~\citep{kang2025acon}
& 71.8 \badgain{-1.1}
& 68.6 \badgain{-13.1}
& \textbf{47.9} \goodgain{+3.4}
& 65.0 \badgain{-6.0}
& 63.3 \badgain{-4.2}
& 1.40M \goodgain{-25.7\%}
\\

Self-GC~\citep{hao2026self}
& 63.5 \badgain{-9.5}
& 71.7 \badgain{-10.0}
& 43.7 \badgain{-0.9}
& 71.0 \neutralgain{0.0}
& 62.5 \badgain{-5.1}
& 1.66M \goodgain{-12.4\%}
\\

LRE~\citep{jahan2026learning}
& 62.6 \badgain{-10.3}
& 63.4 \badgain{-18.2}
& 27.1 \badgain{-17.4}
& 68.0 \badgain{-3.0}
& 55.3 \badgain{-12.2}
& 2.29M \badgain{+21.3\%}
\\

CoMem~\citep{zhang2026comem}
& 56.4 \badgain{-16.5}
& 59.9 \badgain{-21.8}
& 2.5 \badgain{-42.1}
& 58.0 \badgain{-13.0}
& 44.2 \badgain{-23.3}
& \textbf{0.77M} \goodgain{-59.5\%}
\\

SAM~\citep{hu2026sam}
& 67.6 \badgain{-5.3}
& 81.6 \neutralgain{0.0}
& 47.2 \goodgain{+2.7}
& 67.0 \badgain{-4.0}
& 65.9 \badgain{-1.7}
& 1.35M \goodgain{-28.7\%}
\\

SWE-Pruner~\citep{wang2026swe}
& 63.5 \badgain{-9.5}
& 72.6 \badgain{-9.0}
& 40.9 \badgain{-3.7}
& 65.0 \badgain{-6.0}
& 60.5 \badgain{-7.1}
& 1.50M \goodgain{-20.5\%}
\\

Sculptor~\citep{li2026sculptor}
& 51.2 \badgain{-21.7}
& 81.6 \neutralgain{0.0}
& 35.2 \badgain{-9.3}
& 69.0 \badgain{-2.0}
& 59.3 \badgain{-8.3}
& 1.29M \goodgain{-31.5\%}
\\

\midrule
\rowcolor{groupgray}
\multicolumn{7}{c}{\textbf{Released-policy Method}} \\
\midrule

ACM~\citep{li2026acm}
& 38.7 \badgain{-34.2}
& 55.9 \badgain{-25.7}
& 6.2 \badgain{-38.4}
& 34.0 \badgain{-37.0}
& 33.7 \badgain{-33.8}
& 2.36M
\\

\bottomrule
\end{tabular*}

\end{table*}

We next evaluate \method{} on the complete 260 task benchmark. Table~\ref{tab:full260} follows the score and token presentation used by prior WorkBuddyBench context management evaluations. Average reward increases from $0.699$ to $0.718$. Detailed token accounting is reported separately in the appendix. The method obtains $102/76/82$ wins, ties, and losses. These results show that aggressive online pruning of reasoning can reduce repeated processing of historical context without sacrificing aggregate task performance across domains.

Domain-level results and component-wise token changes are reported in Appendix~\ref{app:system_results}.

\begin{table*}[htbp]

\centering
\caption{
\textbf{Full benchmark system performance and token usage.}
Scores are multiplied by $100$.
Green and red indicate favorable and unfavorable changes from the base system.
}
\label{tab:full260}

\small
\setlength{\tabcolsep}{4pt}
\renewcommand{\arraystretch}{1.10}

\begin{tabular*}{\textwidth}{@{\extracolsep{\fill}}l|ccccc|c@{}}
\toprule
Model & Code & Office & Sec. & Web & Avg. & Tokens $\downarrow$ \\
\midrule

DeepSeek-V4-Flash
& \textbf{77.0}
& \textbf{81.8}
& 47.8
& \textbf{72.1}
& 69.9
& 2.69M
\\

\rowcolor{oursrow}
\textbf{\method{} (Ours)}
& 70.6 \badgain{-6.4}
& 77.6 \badgain{-4.2}
& \textbf{70.7} \goodgain{+22.9}
& 69.9 \badgain{-2.2}
& \textbf{71.8} \goodgain{+1.9}
& \textbf{2.19M} \goodgain{-18.5\%}
\\

\bottomrule
\end{tabular*}

\end{table*}

\subsection{Ablations and Trajectory Amplification}
\begin{wrapfigure}{r}{0.40\columnwidth}

\centering
\includegraphics[width=\linewidth]{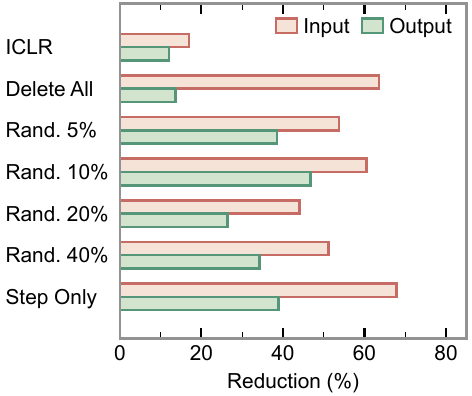}

\caption{\footnotesize\textbf{Trajectory amplification.}
Token savings do not track deletion.}
\label{fig:trajectory}

\end{wrapfigure}
We next study whether the observed effect depends on entropy based selection, access to full history during scoring, or simply reducing the amount of reasoning. Table~\ref{tab:ablation} reports results on the fixed 80 task ablation set. \method{} reaches a reward of $0.738$, compared with $0.627$ for the baseline. Step only entropy reaches $0.711$, while deleting all thinking reaches $0.718$. Random deletion also changes task performance, but none of the reported random variants reaches the reward of full history entropy. These results suggest that reasoning history contains substantial redundancy while also indicating that context conditioned ranking provides useful information beyond deletion alone. Appendix~\ref{app:ablations} reports the full intervention and token summaries used for this analysis.

\begin{table*}[htbp]
\centering
\caption{
\textbf{Ablation results on the fixed 80 task WorkBuddyBench set.}
All variants use the same fixed 80 tasks, with 20 Code, 20 Office, 20 Security, and 20 Web tasks. Scores are multiplied by $100$.
Parenthesized values show changes from the Base Agent.
}
\label{tab:ablation}

\small
\setlength{\tabcolsep}{2.4pt}
\renewcommand{\arraystretch}{1.10}

\begin{tabular*}{\textwidth}{@{\extracolsep{\fill}}l|ccccc|c@{}}
\toprule
Method & Code & Office & Sec. & Web & Avg. & Tokens $\downarrow$ \\
\midrule

\rowcolor{groupgray}
\multicolumn{7}{c}{\textbf{Base and Ours}} \\
\midrule

Base Agent
& 69.09
& 78.94
& 44.17
& 58.50
& 62.67
& 3.00M
\\

\rowcolor{oursrow}
\textbf{\method{} (Ours)}
& 63.85 \badgain{-5.24}
& 81.14 \goodgain{+2.20}
& 72.03 \goodgain{+27.86}
& \textbf{78.00} \goodgain{+19.50}
& \textbf{73.76} \goodgain{+11.09}
& 2.49M \goodgain{-17.0\%}
\\

\midrule
\rowcolor{groupgray}
\multicolumn{7}{c}{\textbf{Reasoning Removal Controls}} \\
\midrule

Delete All Thinking
& \textbf{69.20} \goodgain{+0.11}
& 82.04 \goodgain{+3.10}
& 67.41 \goodgain{+23.24}
& 68.50 \goodgain{+10.00}
& 71.78 \goodgain{+9.11}
& 1.11M \goodgain{-62.9\%}
\\

Random 5\%
& 59.81 \badgain{-9.28}
& 81.12 \goodgain{+2.18}
& 64.17 \goodgain{+20.00}
& 66.83 \goodgain{+8.33}
& 67.98 \goodgain{+5.31}
& 1.40M \goodgain{-53.5\%}
\\

Random 10\%
& 55.08 \badgain{-14.01}
& \textbf{82.65} \goodgain{+3.71}
& 71.91 \goodgain{+27.74}
& 60.61 \goodgain{+2.11}
& 67.56 \goodgain{+4.89}
& 1.19M \goodgain{-60.3\%}
\\

Random 20\%
& 67.23 \badgain{-1.86}
& 76.92 \badgain{-2.02}
& 64.82 \goodgain{+20.65}
& 61.50 \goodgain{+3.00}
& 67.62 \goodgain{+4.95}
& 1.69M \goodgain{-43.8\%}
\\

Random 40\%
& 61.35 \badgain{-7.74}
& 76.17 \badgain{-2.77}
& 61.37 \goodgain{+17.20}
& 71.62 \goodgain{+13.12}
& 67.63 \goodgain{+4.96}
& 1.47M \goodgain{-51.0\%}
\\

\midrule
\rowcolor{groupgray}
\multicolumn{7}{c}{\textbf{Scoring Context Ablation}} \\
\midrule

Step Only Entropy
& 60.48 \badgain{-8.61}
& 80.58 \goodgain{+1.64}
& \textbf{73.10} \goodgain{+28.93}
& 70.38 \goodgain{+11.88}
& 71.14 \goodgain{+8.47}
& 0.98M \goodgain{-67.5\%}
\\

\bottomrule
\end{tabular*}

\end{table*}

The most striking observation is that the local deletion ratio is not monotonically related to final system level computation. Random deletion of $10\%$ of reasoning tokens reduces total input by $60.5\%$, whereas random deletion of $20\%$ reduces input by only $44.0\%$. Deleting all thinking reduces input by $63.6\%$, which is not qualitatively proportional to deleting all reasoning text. Different interventions also produce substantially different numbers of agent calls. We refer to this behavior as \emph{trajectory amplification}: a local change to reasoning history can be amplified through tool selection, observations, repeated reasoning, recovery, and termination, producing a much larger change in the final trajectory. Detailed intervention and token results are reported in Appendix~\ref{app:ablations}.

\subsection{Future Necessity Analysis}

We evaluate future necessity on $265$ usable compression boundary samples from $31$ tasks, with $112$ positive behavioral labels. Under strict task level nested evaluation, the probe reaches an AUROC of $0.844$, an AP of $0.811$, and a balanced accuracy of $0.760$. The prompt last representation is selected in all outer folds, while the selected layers span middle and later parts of the network. These results show that the current representation contains a readable signal associated with later rederivation, additional reasoning, and replanning. Full cohort and layer selection details are reported in Appendix~\ref{app:probe_protocol}.

Activation patching provides complementary evidence at the representation level. On the $47$ sample patching cohort, replacing deleted history residuals with their full history counterparts progressively restores the next token distribution. Mean KL Repair rises from $0.047$ at layer 4 to $0.459$ at layer 20, $0.724$ at layer 28, and $0.978$ at layer 31. The complete layerwise results, confidence intervals, and top one agreement values are reported in Appendix~\ref{app:patching}, Table~\ref{tab:patching_full}.

\subsection{State Externalization and Controlled Interventions}

Trajectory analysis provides a state based explanation for when historical reasoning remains useful. When task specific derived state is still available only in reasoning, the future behavioral risk rate is $67.3\%$, compared with $36.2\%$ when that state has already been externalized. The within task increase is $19.8$ percentage points, and the corresponding hidden score difference is $0.283$. Detailed group statistics and task level intervals are reported in Appendix~\ref{app:gap}.

Controlled interventions support the same interpretation. Keeping only the latest reasoning preserves the full history rule score of $0.872$, whereas deleting only the latest reasoning reduces the score to $0.760$, and deleting all reasoning reduces it to $0.675$. Once relevant code has been written to disk, clearing subsequent reasoning leaves the rule score unchanged at $0.759$. Environmental feedback shows the same pattern. When an observed \texttt{stderr} remains available, the agent proceeds directly to Edit. When the same feedback is hidden, the agent issues another Bash call and observes the same error again. The fixed state experiments are reported in Appendix~\ref{app:controlled_interventions}, and the environmental feedback intervention is reported in Appendix~\ref{app:environment_memory}.

Taken together, these results suggest that historical reasoning is most useful while it remains the only reliable carrier of task relevant derived state. As that state is transferred into persistent artifacts or environmental feedback, the original reasoning becomes increasingly replaceable.

\section{Conclusion}
\label{sec:conclusion}

We study when a long horizon agent can forget its own reasoning. Unlike static Chain of Thought compression, online compression changes future decisions and the resulting trajectory. \method{} uses frozen proxy entropy to remove low entropy reasoning while preserving actions and observations, reducing token consumption and improving aggregate reward on the full 260 task evaluation. Ablations reveal trajectory amplification, where local deletion and final system savings are not proportional. Representation probing and activation patching show that current states encode information associated with later reasoning reuse, while controlled analyses indicate that historical reasoning becomes more replaceable as task relevant derived state moves into reliable external carriers. These results more clearly characterize agent reasoning as dynamic working state whose value changes throughout interaction.

\subsection*{AI use statement}

Generative AI tools were used for language editing, drafting assistance, and literature discovery during preparation of this manuscript. All AI assisted text, claims, citations, and experimental descriptions were reviewed by the authors. Experimental results and interpretations were checked against the underlying records, and the authors take full responsibility for the final content of the paper.

\subsection*{Reproducibility statement}

The main paper specifies the online compression protocol, proxy entropy
definition, pruning rule, benchmark composition, and principal ablations.
The appendix provides implementation details, evaluation set composition,
extended system and ablation results, hidden state probing and activation
patching protocols, controlled state interventions, and theoretical analysis.

\clearpage
\setlength{\bibsep}{.5ex plus .8ex}
\bibliography{iclr2027_conference}
\bibliographystyle{unsrtnat}

\clearpage
\appendix
\numberwithin{equation}{section}
\setlength{\emergencystretch}{2em}

\section{The Use of Large Language Models}
\label{app:llm_use}

Generative AI tools were used for language editing, drafting assistance, literature discovery, LaTeX preparation, and organization of supplementary explanations. All claims, citations, mathematical statements, and experimental descriptions were reviewed by the authors.

\paragraph{Scope.}
The supplementary material expands the definitions, protocols, theoretical properties, and experimental results reported in the main paper.

\FloatBarrier
\section{Theoretical Analysis}
\label{app:theory}

\subsection{Notation and scope}
\label{app:theory_scope}

At one compression boundary, let $C$ denote the context preceding the completed reasoning,
$B=(B_1,\ldots,B_N)$ the ordered reasoning blocks, and $U$ the unchanged nonreasoning records of the completed step, including its action, tool record, and returned observation. Let $Y$ denote a downstream target under a fixed reference distribution $P$. We write $B_{<j}=(B_1,\ldots,B_{j-1})$, $B_{-j}$ for all reasoning blocks except $B_j$, and $B_D$ for the chronologically ordered tuple indexed by $D$.

We use $\mathsf H_P$ and $I_P$ for population Shannon entropy and mutual information, while $H(B_j)$ in Section~\ref{sec:entropy_pruning} denotes the realized proxy score. The following results connect these quantities through the information structure of retained and removed reasoning.

\subsection{Residual information of a reasoning block}
\label{app:proof_single}

\begin{applemma}[Residual information bound]
\label{lem:residual_information}
For every block $B_j$,
\begin{equation}
I_P(B_j;Y\mid C,U,B_{-j})
\leq
\mathsf H_P(B_j\mid C,B_{<j}).
\label{eq:single_information_bound}
\end{equation}
\end{applemma}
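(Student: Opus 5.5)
The plan is to chain two standard Shannon-theoretic inequalities, with the conditioning set $Z:=(C,U,B_{-j})$ playing the central role. Throughout I treat $C$, $U$, the blocks $B_i$, and $Y$ as discrete random variables under $P$: blocks and records are finite token strings over a finite vocabulary, so they take countably many values. If $\mathsf H_P(B_j\mid C,B_{<j})=+\infty$, the inequality \eqref{eq:single_information_bound} holds trivially. So I assume this quantity is finite, which makes every term below finite and the manipulations legitimate.

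\emph{Step 1 (mutual information is bounded by entropy).} Expand the conditional mutual information as
\begin{equation*}
I_P(B_j;Y\mid Z)=\mathsf H_P(B_j\mid Z)-\mathsf H_P(B_j\mid Y,Z).
\end{equation*}
For discrete variables the subtracted term is nonnegative, so
\begin{equation*}
I_P(B_j;Y\mid C,U,B_{-j})\le \mathsf H_P(B_j\mid C,U,B_{-j}).
\end{equation*}

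\emph{Step 2 (conditioning reduces entropy).} The tuple $(C,B_{<j})$ is a sub-collection of $(C,U,B_{-j})$, because $B_{<j}\subseteq B_{-j}$. The additional variables are $W:=(U,B_{>j})$, where $B_{>j}=(B_{j+1},\ldots,B_N)$. Then
\begin{equation*}
\mathsf H_P(B_j\mid C,B_{<j})-\mathsf H_P(B_j\mid C,B_{<j},W)=I_P(B_j;W\mid C,B_{<j})\ge 0 .
\end{equation*}
Hence $\mathsf H_P(B_j\mid C,U,B_{-j})\le \mathsf H_P(B_j\mid C,B_{<j})$. Combining this with Step 1 gives \eqref{eq:single_information_bound}.

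\emph{Main obstacle.} The argument itself is routine, so the care lies in the measure-theoretic setup rather than in any inequality. Two points need attention. First, the variables must be discrete, or at least the entropies finite, so that the chain-rule decompositions are valid and nonnegativity of conditional entropy holds. I handle this through the countable-support observation above and the trivial case for an infinite right-hand side. Second, the symbol $\mathsf H_P$ must be kept distinct from the realized proxy score $H(B_j)$ of Section~\ref{sec:entropy_pruning}. The lemma concerns only the population quantity; any link to the proxy score would be a separate modeling assumption, and no such link is needed here.
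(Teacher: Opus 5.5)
Your proof is correct and matches the paper's argument: expand the conditional mutual information, drop the nonnegative term $\mathsf H_P(B_j\mid C,U,B_{-j},Y)$, then use that $(C,B_{<j})$ is a sub-collection of $(C,U,B_{-j})$, so that conditioning can only reduce entropy. Your handling of discreteness and of the infinite-entropy case is a harmless refinement that the paper leaves implicit.
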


\begin{proof}
By the definition of conditional mutual information,
\begin{align}
I_P(B_j;Y\mid C,U,B_{-j})
&=
\mathsf H_P(B_j\mid C,U,B_{-j})
-
\mathsf H_P(B_j\mid C,U,B_{-j},Y)
\\
&\leq
\mathsf H_P(B_j\mid C,U,B_{-j})
\\
&\leq
\mathsf H_P(B_j\mid C,B_{<j}).
\end{align}
The first inequality follows from nonnegativity of conditional entropy. The second follows because conditioning on $U$ and all blocks in $B_{-j}$ includes the earlier blocks $B_{<j}$ and can only reduce conditional entropy.
\end{proof}

\paragraph{Interpretation.}
Under the reference distribution, a block with low true conditional entropy has a correspondingly limited residual information contribution about $Y$. This bound motivates conditional uncertainty as a compression signal and provides an information theoretic view of why low uncertainty reasoning can be more replaceable.

\subsection{Several removed blocks}
\label{app:proof_subset}

\begin{apptheorem}[Fixed subset information bound]
\label{thm:subset_information}
For any index set $D\subseteq\{1,\ldots,N\}$ fixed independently of the realized block values,
\begin{equation}
I_P(B_D;Y\mid C,U,B_{\bar D})
\leq
\sum_{j\in D}
\mathsf H_P(B_j\mid C,B_{<j}),
\label{eq:subset_information_bound}
\end{equation}
where $\bar D$ is the complement of $D$.
\end{apptheorem}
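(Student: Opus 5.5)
The plan mirrors the argument for Lemma~\ref{lem:residual_information}, applied to the whole removed tuple $B_D$ at once and followed by a chain-rule decomposition. First drop the $Y$-conditioned term in the definition of conditional mutual information:
\begin{equation*}
I_P(B_D;Y\mid C,U,B_{\bar D})
=\mathsf H_P(B_D\mid C,U,B_{\bar D})-\mathsf H_P(B_D\mid C,U,B_{\bar D},Y)
\leq \mathsf H_P(B_D\mid C,U,B_{\bar D}),
\end{equation*}
which uses only nonnegativity of conditional entropy. Because $D$ is fixed independently of the realized block values, $B_D$ and $B_{\bar D}$ are well-defined random tuples, and these population quantities make sense. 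If $D$ were chosen as a function of the blocks, as BottomK does, this step would fail; that is why the hypothesis is stated.

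Next, write $D=\{j_1<\cdots<j_m\}$ and expand with the chain rule in chronological order:
\begin{equation*}
\mathsf H_P(B_D\mid C,U,B_{\bar D})=\sum_{k=1}^{m}\mathsf H_P\bigl(B_{j_k}\mid C,U,B_{\bar D},B_{j_1},\ldots,B_{j_{k-1}}\bigr).
\end{equation*}
Bound each summand by removing conditioning, since conditioning can only reduce entropy. The conditioning set of the $k$th term contains every block with index less than $j_k$. Blocks with smaller index outside $D$ lie in $B_{\bar D}$, and blocks with smaller index inside $D$ are among $B_{j_1},\ldots,B_{j_{k-1}}$. So the set contains $(C,B_{<j_k})$, and dropping $U$ together with the later blocks gives $\mathsf H_P(B_{j_k}\mid C,B_{<j_k})$. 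Summing over $k$ yields the claimed bound. Equivalently, each term is the Lemma~\ref{lem:residual_information} entropy bound applied under a larger conditioning set.

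The main point to verify is this containment of $B_{<j_k}$ in the conditioning set. It is exactly why the chain rule must be taken in chronological order: any other ordering could leave an earlier removed block out of the conditioning and break the comparison with $\mathsf H_P(B_{j_k}\mid C,B_{<j_k})$. The argument also relies on finiteness of the entropies, which I take as implicit since the blocks are finite token strings.
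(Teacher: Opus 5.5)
Your proposal is correct and follows the paper's proof essentially step for step. Like the paper, you bound the mutual information by $\mathsf H_P(B_D\mid C,U,B_{\bar D})$, expand it with the chain rule over $D=\{j_1<\cdots<j_m\}$ in chronological order, and drop conditioning after checking that each conditioning set contains $(C,B_{<j_r})$.
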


\begin{proof}
Write $D=\{j_1<\cdots<j_m\}$. Then
\begin{align}
I_P(B_D;Y\mid C,U,B_{\bar D})
&\leq
\mathsf H_P(B_D\mid C,U,B_{\bar D})
\\
&=
\sum_{r=1}^{m}
\mathsf H_P
\left(
B_{j_r}\mid
C,U,B_{\bar D},B_{j_1},\ldots,B_{j_{r-1}}
\right)
\\
&\leq
\sum_{r=1}^{m}
\mathsf H_P(B_{j_r}\mid C,B_{<j_r}).
\end{align}
The first inequality bounds mutual information by conditional entropy. The equality is the entropy chain rule. In each term, the conditioning set contains every earlier block, either because that block is retained in $B_{\bar D}$ or because it appears among the previously selected blocks.
\end{proof}

\paragraph{Adaptive ranking.}
The implemented deletion set is selected from observed proxy entropy scores and is therefore data dependent. Theorem~\ref{thm:subset_information} characterizes the fixed subset information structure, while ICLR extends this principle through online ranking of observed reasoning blocks.

\subsection{Online trajectory perturbation}
\label{app:trajectory_bound}

The main paper emphasizes that a local edit changes the state from which later actions are generated. The following bound makes this dependence explicit. Consider a common finite horizon $K$ and two policies interacting with the same environment from the same initial state. At any common observable history $h$, let $\pi_t(\cdot\mid h)$ and $\widetilde\pi_t(\cdot\mid h)$ denote the next action laws of the base and compressed systems.

\begin{apptheorem}[Finite horizon trajectory perturbation]
\label{thm:trajectory_tv}
Suppose that for every jointly reachable common history,
\begin{equation}
\operatorname{TV}
\left(
\pi_t(\cdot\mid h),
\widetilde\pi_t(\cdot\mid h)
\right)
\leq
\epsilon_t,
\qquad 0\leq \epsilon_t\leq 1.
\end{equation}
Then the observable trajectory laws satisfy
\begin{equation}
\operatorname{TV}
\left(
\mathbb P_\pi,
\mathbb P_{\widetilde\pi}
\right)
\leq
1-\prod_{t=1}^{K}(1-\epsilon_t)
\leq
\min\left\{1,\sum_{t=1}^{K}\epsilon_t\right\}.
\label{eq:trajectory_tv}
\end{equation}
For any terminal score $F\in[0,1]$,
\begin{equation}
\left|
\mathbb E_\pi F-\mathbb E_{\widetilde\pi}F
\right|
\leq
\min\left\{1,\sum_{t=1}^{K}\epsilon_t\right\}.
\label{eq:reward_tv}
\end{equation}
\end{apptheorem}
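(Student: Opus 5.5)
The plan is a coupling argument. We build both systems on one probability space so that they produce identical trajectories with probability at least $\prod_{t=1}^{K}(1-\epsilon_t)$. The total variation bound then follows from the coupling inequality $\operatorname{TV}(\mathbb P_\pi,\mathbb P_{\widetilde\pi})\le \Pr[\tau\neq\widetilde\tau]$.

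\textbf{Construction.} We proceed step by step. The two systems start from the same initial state, so their histories agree at time $1$. At step $t$, suppose the histories still agree at some common history $h$. We draw $(A_t,\widetilde A_t)$ from a maximal coupling of $\pi_t(\cdot\mid h)$ and $\widetilde\pi_t(\cdot\mid h)$. Such a coupling exists for countable action spaces, and for general spaces via the standard construction. It satisfies
\[
\Pr[A_t\neq\widetilde A_t\mid h]=\operatorname{TV}\bigl(\pi_t(\cdot\mid h),\widetilde\pi_t(\cdot\mid h)\bigr)\le\epsilon_t .
\]
The hypothesis applies here because $h$ is jointly reachable. If $A_t=\widetilde A_t$, we feed the same action to the shared environment and use a single draw of the observation kernel, so both histories are extended identically. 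Once the histories diverge, we let the two systems evolve independently, each according to its own policy and its own copy of the environment. By construction each marginal is the correct trajectory law.

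\textbf{Agreement probability and the TV bound.} Let $E_t$ be the event that the histories agree through step $t$. By the tower property, $\Pr[E_t]\ge(1-\epsilon_t)\Pr[E_{t-1}]$. Induction then gives $\Pr[E_K]\ge\prod_{t=1}^{K}(1-\epsilon_t)$, which is the first inequality in \eqref{eq:trajectory_tv}.

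For the second inequality we use $1-\prod_t(1-\epsilon_t)\le\sum_t\epsilon_t$, the Weierstrass product inequality, proved by induction with $\epsilon_t\in[0,1]$. The bound by $1$ holds because the product lies in $[0,1]$.

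\textbf{Reward bound.} For $F\in[0,1]$ we use the standard fact $|\mathbb E_\pi F-\mathbb E_{\widetilde\pi}F|\le\operatorname{TV}(\mathbb P_\pi,\mathbb P_{\widetilde\pi})$. One way to see it is through the coupling: $|\mathbb E[F(\tau)-F(\widetilde\tau)]|\le\Pr[\tau\ne\widetilde\tau]$. Alternatively, shift $F$ by $\tfrac12$ and apply the dual characterization of TV. Combined with \eqref{eq:trajectory_tv}, this gives \eqref{eq:reward_tv}.

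\textbf{Main obstacle.} The delicate step is the measure-theoretic justification of the sequential coupling. The step-$t$ maximal coupling must be chosen measurably as a function of $h$. A careful write-up would either assume countable action and observation spaces or invoke a measurable-selection form of the maximal coupling, and would then apply the Ionescu-Tulcea theorem to assemble the joint law.

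One must also check that the hypothesis is needed only on agreement paths. These are exactly the jointly reachable common histories, so it is never used after divergence.
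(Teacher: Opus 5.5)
Your proposal is correct and follows essentially the same route as the paper. Both arguments maximally couple the actions while the histories agree, draw a shared observation from the common environment on action agreement, use induction to obtain agreement probability at least $\prod_t(1-\epsilon_t)$, and bound both the trajectory TV and the terminal-score gap by the mismatch probability; your Weierstrass product inequality plays the role of the paper's union bound, and your remarks on measurable selection, Ionescu--Tulcea, and joint reachability make explicit the rigor the paper leaves implicit.
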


\begin{proof}
While the two histories agree, couple their next actions maximally. The coupled actions agree with probability at least $1-\epsilon_t$. Conditional on equal actions and equal observable histories, couple the next observation identically through the common environment transition. Induction gives a probability of complete trajectory agreement of at least $\prod_t(1-\epsilon_t)$. Total variation is bounded by the mismatch probability of any coupling, which gives the first inequality. The second follows from the union bound. Since $F\in[0,1]$, its values can differ by at most one on the mismatch event, which yields Equation~\eqref{eq:reward_tv}.
\end{proof}

\paragraph{Interpretation.}
This result formalizes how repeated local policy changes can accumulate across an interaction trajectory and provides a theoretical basis for trajectory amplification.

\subsection{A sufficient condition for forgetting externalized state}
\label{app:externalization_theory}

Let $Z_t$ denote task relevant state that is available from the retained external interaction history, and let $W_t$ denote reasoning content that would be removed.

\begin{appcorollary}[Externalized state sufficiency]
\label{cor:externalization}
Suppose that at every reachable history the full history and compressed policies share the same retained state $Z_t$, and that
\begin{equation}
\pi_t(a\mid Z_t,W_t)
=
q_t(a\mid Z_t)
=
\widetilde\pi_t(a\mid Z_t).
\label{eq:externalized_policy}
\end{equation}
Under the common environment assumptions of Theorem~\ref{thm:trajectory_tv}, the two systems induce identical observable trajectory and terminal score distributions.
\end{appcorollary}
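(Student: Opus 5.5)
The plan is to reduce the corollary to Theorem~\ref{thm:trajectory_tv} with every per-step deviation equal to zero. At a jointly reachable common history $h$, the hypothesis \eqref{eq:externalized_policy} gives that both next-action laws coincide with $q_t(\cdot\mid Z_t)$. The two systems share the same retained state $Z_t$ at $h$. Therefore
\begin{equation}
\operatorname{TV}\left(\pi_t(\cdot\mid h),\widetilde\pi_t(\cdot\mid h)\right)=\operatorname{TV}\left(q_t(\cdot\mid Z_t),q_t(\cdot\mid Z_t)\right)=0 .
\end{equation}
We may thus take $\epsilon_t=0$ for all $t\le K$. Theorem~\ref{thm:trajectory_tv} then yields $\operatorname{TV}(\mathbb P_\pi,\mathbb P_{\widetilde\pi})\le 1-\prod_t(1-0)=0$, so the observable trajectory laws coincide.

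For the terminal score, take any measurable $F$ of the observable trajectory. Equality of the laws gives equality of the laws of $F$. If only bounded scores $F\in[0,1]$ are intended, Equation~\eqref{eq:reward_tv} already gives $|\mathbb E_\pi F-\mathbb E_{\widetilde\pi}F|\le 0$. Equality of distributions is the stronger statement and follows directly from pushing forward equal trajectory laws through $F$.

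The main point requiring care is the meaning of ``common observable history'' and reachability. The full-history policy conditions on $(Z_t,W_t)$, while the compressed policy sees only $Z_t$. The observable trajectory must therefore be understood as actions and observations, with $W_t$ treated as internal or removed content. Under that reading, $Z_t$ must be a function of that history, so that it is the same for both systems, as the hypothesis asserts.

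I would check that the coupling argument in Theorem~\ref{thm:trajectory_tv} only requires the bound at histories reached with positive probability under both systems. This is exactly where the hypothesis is assumed. An alternative is a direct induction on $t$: the two laws of the length-$t$ history prefix agree; the next-action kernels agree by \eqref{eq:externalized_policy}; and the observation kernel is the common environment transition. Hence the length-$(t+1)$ prefix laws agree. This avoids any subtlety with zero-probability histories and also gives the result at horizon $K$.
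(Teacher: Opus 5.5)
Your proposal is correct and follows the paper's proof. Hypothesis \eqref{eq:externalized_policy} forces zero total variation between the two next-action laws at every jointly reachable common history, so Theorem~\ref{thm:trajectory_tv} applies with $\epsilon_t=0$ for all $t$ and gives zero trajectory and terminal-score distance. Your additional points, the pushforward of equal trajectory laws through $F$ and the direct induction on prefix laws, usefully tighten the argument, because Equation~\eqref{eq:reward_tv} alone only controls expectations of $[0,1]$-valued scores, whereas the corollary asserts equality of distributions.
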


\begin{proof}
Equation~\eqref{eq:externalized_policy} implies zero total variation between the two next action laws at every common history. Thus $\epsilon_t=0$ for all $t$ in Theorem~\ref{thm:trajectory_tv}, and both the trajectory and terminal score distances are zero.
\end{proof}

\paragraph{Interpretation.}
The corollary gives a sufficient condition under which externalized state can fully replace discarded reasoning. The controlled experiments in Section~\ref{sec:experiments} examine local cases in which useful derived state moves from reasoning into code, files, tool outputs, or environmental feedback.

\subsection{Exact token accounting and trajectory amplification}
\label{app:token_decomposition}

Let
\begin{equation}
T^{\mathrm b}
=
\sum_{k=1}^{K_b}
\left(
I_k^{\mathrm b}+O_k^{\mathrm b}
\right),
\qquad
T^{\mathrm c}
=
\sum_{k=1}^{K_c}
\left(
I_k^{\mathrm c}+O_k^{\mathrm c}
\right)
\end{equation}
be the processed token totals of the base and compressed trajectories, where $I_k$ and $O_k$ denote input and output tokens of request $k$.

\begin{appproposition}[Trajectory level token decomposition]
\label{prop:token_decomposition}
Let $K_*=\min(K_b,K_c)$. Then
\begin{align}
T^{\mathrm c}-T^{\mathrm b}
={}&
\sum_{k=1}^{K_*}
\left[
(I_k^{\mathrm c}-I_k^{\mathrm b})
+
(O_k^{\mathrm c}-O_k^{\mathrm b})
\right]
\nonumber\\
&+
\sum_{k=K_*+1}^{K_c}
(I_k^{\mathrm c}+O_k^{\mathrm c})
-
\sum_{k=K_*+1}^{K_b}
(I_k^{\mathrm b}+O_k^{\mathrm b}).
\label{eq:token_decomposition}
\end{align}
\end{appproposition}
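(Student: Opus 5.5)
The identity in Proposition~\ref{prop:token_decomposition} is purely algebraic, so the plan is to split each finite sum at the common index $K_*$ and regroup. No probabilistic assumptions or earlier results are needed; Theorem~\ref{thm:trajectory_tv} only motivates why $K_b\neq K_c$ can occur.

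First, write each total as a sum over the shared prefix plus a tail:
\begin{equation*}
T^{\mathrm c}=\sum_{k=1}^{K_*}(I_k^{\mathrm c}+O_k^{\mathrm c})+\sum_{k=K_*+1}^{K_c}(I_k^{\mathrm c}+O_k^{\mathrm c}),
\qquad
T^{\mathrm b}=\sum_{k=1}^{K_*}(I_k^{\mathrm b}+O_k^{\mathrm b})+\sum_{k=K_*+1}^{K_b}(I_k^{\mathrm b}+O_k^{\mathrm b}).
\end{equation*}
This split is valid because $K_*\le K_c$ and $K_*\le K_b$. We adopt the convention that a sum whose upper limit is below its lower limit is empty and equals zero.

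Second, subtract the two expressions. The two prefix sums share the same index range $1,\ldots,K_*$, so we combine them termwise by linearity of finite sums. Each summand becomes $(I_k^{\mathrm c}-I_k^{\mathrm b})+(O_k^{\mathrm c}-O_k^{\mathrm b})$. The two tails are carried over unchanged, with their respective signs, which gives exactly Equation~\eqref{eq:token_decomposition}.

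Third, check the two cases to confirm the convention is consistent.
\begin{itemize}
\item If $K_c\ge K_b$, then $K_*=K_b$. The baseline tail is empty, and only the extra compressed requests contribute positively.
\item If $K_b>K_c$, then $K_*=K_c$. The compressed tail vanishes, and the unmatched baseline requests contribute negatively.
\end{itemize}

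There is no substantive obstacle. The only point requiring care is stating the empty-sum convention explicitly, so that the formula is well defined when one horizon ends before the other.

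I would close with an interpretive remark on the two parts of the decomposition:
\begin{itemize}
\item The prefix term captures per-request savings from shorter histories.
\item The tail terms capture changes in the number of requests.
\end{itemize}
The tail terms are the source of trajectory amplification.
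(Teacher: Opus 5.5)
Your proof is correct and matches the paper's argument: split each total at $K_*$ into a matched prefix and an unmatched suffix, subtract the prefixes termwise, and carry the suffixes over, with your explicit empty-sum convention stating what the paper leaves implicit. One caveat applies only to your closing remark: once the trajectories diverge, request $k$ of the compressed run need not correspond to request $k$ of the base run, so the prefix differences also absorb trajectory-induced changes in input and output, and the tails are therefore not the sole source of trajectory amplification.
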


\begin{proof}
Split each total into the first $K_*$ requests and its unmatched suffix, then subtract the two sums term by term.
\end{proof}

\paragraph{Interpretation.}
Direct deletion affects the serialized context of one request, while Equation~\eqref{eq:token_decomposition} also captures changes to later input, later output, and the number of requests. Whole trajectory savings therefore depend on the induced interaction path as well as the local deletion itself. This identity gives the accounting basis for the trajectory amplification analysis in Section~\ref{sec:experiments}.

\FloatBarrier
\section{Implementation and Online Compression Protocol}
\label{app:implementation}

\subsection{Runtime settings}
\label{app:runtime}

\begin{table}[htbp]
\centering
\small
\setlength{\tabcolsep}{5pt}
\renewcommand{\arraystretch}{1.10}
\caption{Recorded execution settings for the acting agent and frozen proxy scorer.}
\label{tab:runtime_settings}
\begin{tabular}{@{}l>{\raggedright\arraybackslash}p{0.58\linewidth}@{}}
\toprule
Component & Setting \\
\midrule
Acting model & DeepSeek-V4-Flash \\
Thinking / reasoning effort & Enabled / \texttt{high} \\
Requested temperature & $1$ \\
Requested maximum tokens & $384{,}000$ \\
Streaming & Enabled \\
Frozen scorer & \texttt{acm-browsecompplus-qwen3.5-9b-}\newline
\texttt{opd-iter3} \\
Scorer execution & \texttt{eval()} and \texttt{inference\_mode()} \\
Full context scorer limit & $42{,}000$ tokens \\
Deletion rule & Lowest $\lfloor0.8N\rfloor$ nonempty reasoning blocks \\
Replacement text & Literal \texttt{[SKIP]} \\
\bottomrule
\end{tabular}
\end{table}

\paragraph{Interpretation.}
The acting model generates actions and reasoning, while the frozen scorer is used only to compute the ranking signal. No scorer update is performed during evaluation. When the scorer input exceeds its $42$K limit, pruning is skipped for that step rather than truncating the acting agent's history.

\subsection{Reasoning extraction and entropy alignment}
\label{app:token_alignment}

Reasoning is taken from \texttt{thinking} when available. The remaining fallbacks are
\texttt{reasoning\_\allowbreak content}, \texttt{reasoning}, and a
\texttt{<think>...</think>} parser. Nonempty segments separated by
\texttt{\textbackslash n\textbackslash n} define candidate blocks. Only reasoning spans are editable. Actions, tool arguments, tool outputs, observations, and final answers remain unchanged.

For scorer token $x_t$, let $z_{t-1}$ denote the logits that predict that token. The implementation computes
\begin{align}
\ell_t(v)
&=
\operatorname{logsoftmax}(z_{t-1})_v,
\\
h_t
&=
-\frac{1}{\ln2}
\sum_{v\in\mathcal V}
\exp(\ell_t(v))\ell_t(v),
\\
H(B_j)
&=
|\mathcal I_j|^{-1}
\sum_{t\in\mathcal I_j}
h_t,
\label{eq:implementation_entropy}
\end{align}
where $\mathcal I_j$ is the set of scorer token positions assigned to block $B_j$.

\paragraph{Interpretation.}
Equation~\eqref{eq:implementation_entropy} is predictive entropy rather than observed token surprisal. The score is averaged within each block and used only for ranking candidate blocks.

\subsection{Online write back}
\label{app:online_protocol}

The first agent request is executed without compression. Before each subsequent request, the reasoning from the just completed interaction step is partitioned, scored, and rewritten in persistent history. All later requests consume this rewritten history rather than the original reasoning.

For $N$ nonempty reasoning blocks, ICLR replaces the lowest $\lfloor0.8N\rfloor$ blocks with \texttt{[SKIP]}. When $N=1$, no block is removed. The nominal fraction is defined over blocks, so the realized token reduction varies with block length.

\subsection{Intervention definitions}
\label{app:interventions}

\begin{table}[htbp]
\centering
\small
\setlength{\tabcolsep}{5pt}
\renewcommand{\arraystretch}{1.10}
\caption{Intervention families used in the ablation study.}
\label{tab:intervention_definitions}
\begin{tabular}{@{}l>{\raggedright\arraybackslash}p{0.57\linewidth}@{}}
\toprule
Variant & Operation \\
\midrule
Full history entropy & Score with the causal full context and replace the lowest $80\%$ of reasoning blocks. \\
Step only entropy & Score the just completed reasoning without earlier trajectory context. \\
Delete all thinking & Remove the complete targeted reasoning while preserving actions and observations. \\
Random $r$ & Remove a deterministic random subset of reasoning tokens for $r\in\{0.05,0.10,0.20,0.40\}$. \\
\bottomrule
\end{tabular}
\end{table}

\paragraph{Interpretation.}
The controls separate three questions. Delete all thinking tests wholesale removal, Random $r$ isolates deletion without entropy ranking, and Step only entropy measures the contribution of earlier trajectory context to scoring. Random removal operates at token level, whereas ICLR operates at block level, so the percentages describe different intervention granularities.

\FloatBarrier
\section{Evaluation Protocol}
\label{app:evaluation}

\subsection{Evaluation sets}
\label{app:datasets}

\begin{table}[htbp]
\centering
\small
\setlength{\tabcolsep}{4.5pt}
\renewcommand{\arraystretch}{1.10}
\caption{Evaluation sets used for the reported system and representation analyses.}
\label{tab:evaluation_sets}
\begin{tabular}{@{}l>{\raggedright\arraybackslash}p{0.58\linewidth}@{}}
\toprule
Set & Scope \\
\midrule
Full benchmark & $260$ tasks: Code $80$, Office $50$, Security $60$, Web $70$ \\
Pilot40 & Fixed $40$ task context management comparison \\
Ablation set & Fixed $80$ tasks: $20$ per domain \\
Representation cohort & $265$ usable boundary samples from $31$ tasks \\
Patching cohort & $47$ samples from $8$ tasks \\
Fixed state diagnostics & Controlled Office and error state continuations \\
\bottomrule
\end{tabular}
\end{table}

\paragraph{Interpretation.}
These rows correspond to different evaluation units. Whole task results measure end to end agent behavior, while representation and fixed state analyses operate on local states sampled from trajectories. Each analysis is reported with its corresponding denominator.

\subsection{Metrics and token accounting}
\label{app:token_accounting}

For a consistently defined usage metric $X$, we report the relative change
\begin{equation}
\Delta_X
=
100
\left(
\frac{X_{\rm method}}{X_{\rm base}}
-1
\right).
\label{eq:token_change}
\end{equation}
The processed token total in the main tables is input plus output tokens. Cache read usage is reported separately rather than added again to input.

Task performance is measured by the official WorkBuddyBench reward. Domain means are reported together with the aggregate task result. Wins, ties, and losses compare the method and base system on the same benchmark tasks.

\subsection{Task level uncertainty}
\label{app:statistics}

For the externalization analysis, reported confidence intervals use task level resampling so that multiple states from the same task remain grouped. If $\widehat\theta$ is a statistic computed from $M$ task clusters, a percentile interval has the form
\begin{equation}
\left[
Q_{0.025}(\widehat\theta^*),
Q_{0.975}(\widehat\theta^*)
\right],
\end{equation}
where each bootstrap replicate resamples complete tasks with replacement. The appendix reports the recorded task level intervals for the externalization analysis.

\FloatBarrier
\section{Extended System Results}
\label{app:system_results}

\subsection{Full benchmark domain results}
\label{app:full260}

\begin{table*}[htbp]
\centering
\small
\setlength{\tabcolsep}{4.0pt}
\renewcommand{\arraystretch}{1.10}
\caption{Full260 domain results. Rewards are shown on the original $[0,1]$ scale. Input and output columns report relative changes from the base system.}
\label{tab:full260_extended}
\begin{tabular*}{\textwidth}{@{\extracolsep{\fill}}lrrrrrrl@{}}
\toprule
Domain & $n$ & Base & ICLR & $\Delta R$ & $\Delta I$ (\%) & $\Delta O$ (\%) & W/T/L \\
\midrule
Code & 80 & 0.770000 & 0.706329 & $-0.063671$ & $+6.35$ & $+16.21$ & 16/40/24 \\
Office & 50 & 0.818000 & 0.776152 & $-0.041848$ & $-2.50$ & $-2.55$ & 16/8/26 \\
Security & 60 & 0.478000 & 0.706816 & $+0.228816$ & $-35.23$ & $-31.02$ & 39/9/12 \\
Web & 70 & 0.721000 & 0.698551 & $-0.022449$ & $-37.16$ & $-19.04$ & 31/19/20 \\
\midrule
Overall & 260 & 0.698654 & 0.717775 & $+0.019121$ & $-25.48$ & $-14.41$ & 102/76/82 \\
\bottomrule
\end{tabular*}
\end{table*}

\paragraph{Interpretation.}
The aggregate reward increases from $0.699$ to $0.718$, with the largest positive domain change appearing in Security. Code, Office, and Web show smaller domain scores than their base runs, and Code also uses more input. The complete 260 task aggregate combines these domain specific effects. The aggregate cache read reduction is $33.28\%$.

\subsection{Pilot40 precision}
\label{app:pilot40}

\begin{table}[htbp]
\centering
\small
\setlength{\tabcolsep}{4.5pt}
\renewcommand{\arraystretch}{1.10}
\caption{Recorded Pilot40 precision for ICLR.}
\label{tab:pilot40_precision}
\begin{tabular}{@{}rrrrrr@{}}
\toprule
Code & Office & Security & Web & Mean score & Total $I+O$ \\
\midrule
74.10 & 81.06 & 64.11 & 62.00 & 70.32 & 80,812,679 \\
\bottomrule
\end{tabular}
\end{table}

\paragraph{Interpretation.}
The Pilot40 table corresponds to the fixed forty task context management comparison used in Table~\ref{tab:pilot40}, while Full260 reports the complete benchmark evaluation.

\FloatBarrier
\section{Ablation Details}
\label{app:ablations}

\subsection{Reward summary}
\label{app:ablation_rewards}

\begin{table*}[htbp]
\centering
\small
\setlength{\tabcolsep}{5.0pt}
\renewcommand{\arraystretch}{1.10}
\caption{Ablation reward summary on the fixed 80 task set. Values correspond to the main ablation table and are shown on the original $[0,1]$ scale.}
\label{tab:ablation_appendix}
\begin{tabular*}{\textwidth}{@{\extracolsep{\fill}}lrrrrr@{}}
\toprule
Intervention & Code & Office & Security & Web & Average \\
\midrule
Base Agent & 0.6909 & 0.7894 & 0.4417 & 0.5850 & 0.6267 \\
ICLR & 0.6385 & 0.8114 & 0.7203 & 0.7800 & 0.7376 \\
Delete All Thinking & 0.6920 & 0.8204 & 0.6741 & 0.6850 & 0.7178 \\
Random $5\%$ & 0.5981 & 0.8112 & 0.6417 & 0.6683 & 0.6798 \\
Random $10\%$ & 0.5508 & 0.8265 & 0.7191 & 0.6061 & 0.6756 \\
Random $20\%$ & 0.6723 & 0.7692 & 0.6482 & 0.6150 & 0.6762 \\
Random $40\%$ & 0.6135 & 0.7617 & 0.6137 & 0.7162 & 0.6763 \\
Step Only Entropy & 0.6048 & 0.8058 & 0.7310 & 0.7038 & 0.7114 \\
\bottomrule
\end{tabular*}
\end{table*}

\paragraph{Interpretation.}
ICLR has the highest reported average reward in this ablation summary. Delete All Thinking also performs above the displayed base average, showing that the history contains substantial removable redundancy. Step Only Entropy remains competitive but below full history entropy, which is consistent with the use of earlier trajectory context in the scoring rule. The random controls show that deletion itself can change task performance, with outcomes depending on more than the configured random removal percentage.

\subsection{Token reduction and trajectory amplification}
\label{app:ablation_tokens}

\begin{table}[htbp]
\centering
\small
\setlength{\tabcolsep}{4.5pt}
\renewcommand{\arraystretch}{1.10}
\caption{Input and output reductions used in the trajectory amplification analysis.}
\label{tab:ablation_usage_changes}
\begin{tabular}{@{}lrr@{}}
\toprule
Intervention & Input reduction (\%) & Output reduction (\%) \\
\midrule
ICLR & 17.03 & 12.05 \\
Delete All Thinking & 63.59 & 13.55 \\
Random $5\%$ & 53.68 & 38.46 \\
Random $10\%$ & 60.53 & 46.74 \\
Random $20\%$ & 44.01 & 26.35 \\
Random $40\%$ & 51.21 & 34.22 \\
Step Only Entropy & 67.88 & 38.87 \\
\bottomrule
\end{tabular}
\end{table}

\paragraph{Interpretation.}
Input reduction varies nonmonotonically with the configured local deletion ratio. Random $10\%$ reduces input by $60.53\%$, while Random $20\%$ reduces input by $44.01\%$. This is the empirical pattern referred to as trajectory amplification. Proposition~\ref{prop:token_decomposition} explains this behavior through changes to subsequent requests as well as the current serialized history.

\paragraph{Relation to Figure~\ref{fig:trajectory}.}
The main figure visualizes the mismatch between local intervention strength and whole trajectory consequences, showing that final token and reward outcomes depend on the trajectory induced after the intervention.

\FloatBarrier
\section{Hidden State Probing}
\label{app:probing}

\subsection{Cohort and target}
\label{app:probe_protocol}

A frozen Qwen3.5 9B model reads the context visible at each compression boundary. For sample $i$, layer $l$, and pooling rule $p$,
\begin{equation}
z_i^{(l,p)}
=
\operatorname{Pool}_p
\left(
\mathcal Z_i^{(l)}
\right),
\qquad
\widehat p_i
=
\sigma
\left(
w^\top z_i^{(l,p)}+b
\right).
\label{eq:probe_definition}
\end{equation}
The behavioral target records whether the later trajectory contains rederivation, additional reasoning, or explicit replanning and serves as an operational measure of future reasoning reuse.

\begin{table}[htbp]
\centering
\small
\setlength{\tabcolsep}{4.5pt}
\renewcommand{\arraystretch}{1.10}
\caption{Representation extraction cohort used for future necessity probing.}
\label{tab:probe_cohort}
\begin{tabular}{@{}lr@{}}
\toprule
Quantity & Value \\
\midrule
Usable boundary samples & 265 \\
Distinct tasks & 31 \\
Positive behavioral labels & 112 \\
Transformer layers & 32 \\
Pooling candidates & 5 \\
Outer task folds & 5 \\
\bottomrule
\end{tabular}
\end{table}

\paragraph{Interpretation.}
The unit of analysis is a compression boundary rather than a complete benchmark task. Multiple boundary samples can come from the same task, which is why model selection and evaluation are separated at the task level.

\subsection{Nested task level evaluation}
\label{app:probe_nested_protocol}

Layer selection, pooling choice, feature standardization, and PCA are chosen using training tasks only. The outer folds contain held out tasks. This design prevents samples from the same task from appearing on both sides of the outer evaluation split.

\begin{table}[htbp]
\centering
\small
\setlength{\tabcolsep}{4.5pt}
\renewcommand{\arraystretch}{1.10}
\caption{Strict nested task level probe results.}
\label{tab:probe_nested}
\begin{tabular}{@{}rrrl@{}}
\toprule
AUROC & AP & Balanced accuracy & Selected pooling \\
\midrule
0.8444 & 0.8108 & 0.7597 & \texttt{prompt\_last} \\
\bottomrule
\end{tabular}
\end{table}

\paragraph{Interpretation.}
The probe results show that the current hidden state contains a readable cross task signal associated with later reasoning reuse. The linear probe serves as an analysis instrument for characterizing that signal.

\FloatBarrier
\section{Activation Patching}
\label{app:patching}

\begin{figure}[t]
\centering
\includegraphics[width=0.78\columnwidth]{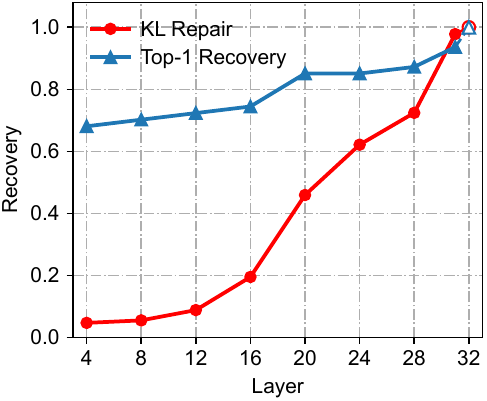}
\caption{\textbf{Activation patching across depth.}
Full history residuals progressively restore the perturbed next token distribution when they replace the corresponding deleted history residual at the prompt boundary.}
\label{fig:activation_patching}
\end{figure}

\paragraph{Figure interpretation.}
The curve summarizes a local intervention on the frozen representation sensor. Larger KL Repair means that the patched representation produces a next token distribution closer to the full history condition. Recovery increases toward later layers, showing that full history information is progressively expressed in representations closer to the output readout.

\subsection{Intervention and measurement}
\label{app:patch_protocol}

The patching analysis uses $47$ samples from $8$ tasks on the frozen Qwen3.5 9B sensor. For sample $i$, let $P_i^{\rm full}$ be the next token distribution under full history, $P_i^{\rm del}$ the distribution after historical reasoning is removed, and $P_{i,l}^{\rm patch}$ the distribution obtained after replacing the deleted history prompt boundary residual at layer $l$ with the corresponding full history residual.

Define
\begin{equation}
d_i
=
D_{\rm KL}
\left(
P_i^{\rm full}
\Vert
P_i^{\rm del}
\right).
\end{equation}
For $d_i>0$, KL Repair is
\begin{equation}
\operatorname{Repair}_i(l)
=
1-
\frac{
D_{\rm KL}
\left(
P_i^{\rm full}
\Vert
P_{i,l}^{\rm patch}
\right)
}{
d_i
}.
\label{eq:sample_repair}
\end{equation}
A value near one indicates strong distributional restoration relative to the deleted history condition.

\subsection{Layerwise results}
\label{app:patch_values}

\begin{table}[htbp]
\centering
\small
\setlength{\tabcolsep}{4.5pt}
\renewcommand{\arraystretch}{1.10}
\caption{Activation patching results on the $47$ sample cohort.}
\label{tab:patching_full}
\begin{tabular}{@{}rrrr@{}}
\toprule
Layer & Mean KL Repair & $95\%$ CI & Top 1 agreement \\
\midrule
4 & 0.047 & [0.027, 0.076] & 0.681 \\
16 & 0.195 & [0.170, 0.235] & 0.745 \\
20 & 0.459 & [0.392, 0.537] & 0.851 \\
24 & 0.622 & [0.545, 0.700] & 0.851 \\
28 & 0.724 & [0.667, 0.788] & 0.872 \\
31 & 0.978 & [0.964, 0.986] & 0.936 \\
32 & 1.000 & [1.000, 1.000] & 1.000 \\
\bottomrule
\end{tabular}
\end{table}

\paragraph{Interpretation.}
Repair rises substantially from early to late layers, matching the main text summary at layers 4, 20, 28, and 31. Top 1 agreement increases in the same direction. Layer 32 is an implementation positive control because patching the final residual immediately before a deterministic readout restores the full history readout input.

\begin{applemma}[Final layer patching control]
\label{lem:last_layer_patch}
Suppose that the final residual $h_L$ is followed by the same deterministic readout map $g$ in the full and deleted history conditions. If the deleted run is patched with $h_L^{\rm full}$, then
\begin{equation}
P_{i,L}^{\rm patch}
=
P_i^{\rm full}.
\end{equation}
Whenever $d_i>0$, this implies $\operatorname{Repair}_i(L)=1$.
\end{applemma}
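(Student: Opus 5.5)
The argument is a direct substitution followed by a one-line evaluation of the repair ratio. Write the next-token distribution in each condition as the image of the final residual under the readout, so that $P_i^{\rm full}=g(h_L^{\rm full})$ and $P_i^{\rm del}=g(h_L^{\rm del})$. Here $g$ is the final normalization, unembedding, and softmax, which the hypothesis takes to be one and the same deterministic map in both conditions.

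\textbf{Step 1 (the patched run).} In the patched deleted-history run, the intervention overwrites the layer-$L$ residual at the prompt boundary with $h_L^{\rm full}$. Since $L$ is the final layer, no transformer block runs after the patch. The only remaining computation is therefore the readout, and it receives exactly $h_L^{\rm full}$.

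\textbf{Step 2 (identify the distribution).} Because $g$ is deterministic and identical across conditions, the patched output is
\[
P_{i,L}^{\rm patch}=g(h_L^{\rm full})=P_i^{\rm full}.
\]

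\textbf{Step 3 (evaluate the repair).} Substituting this into the KL Repair definition of Equation~\eqref{eq:sample_repair} gives a numerator $D_{\rm KL}(P_i^{\rm full}\Vert P_i^{\rm full})=0$, since KL divergence vanishes exactly when its two arguments are equal. For $d_i>0$ the ratio is well defined and equals $0$, so $\operatorname{Repair}_i(L)=1-0/d_i=1$.

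\textbf{Main obstacle.} The only delicate point is making sure the readout sees nothing other than $h_L$ at the boundary position. Two things could break this. First, the final norm or unembedding might depend on other positions; it does not, because both are position-wise. Second, "layer $L$ residual" might be defined before the last block rather than after it. I would fix the convention that $h_L$ is the residual output after the last block, which is exactly what makes the readout input coincide, and state it explicitly as the content of the hypothesis.
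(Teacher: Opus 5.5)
Your proposal is correct and follows the paper's proof exactly: after patching, the same residual $h_L^{\rm full}$ passes through the same deterministic readout $g$, so $P_{i,L}^{\rm patch}=P_i^{\rm full}$, the KL numerator in Equation~\eqref{eq:sample_repair} vanishes, and $\operatorname{Repair}_i(L)=1$ whenever $d_i>0$. Your remarks on the position-wise readout and on taking $h_L$ as the residual after the last block make explicit what the paper folds into the lemma's hypothesis.
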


\begin{proof}
After patching, both conditions pass the same residual $h_L^{\rm full}$ through the same deterministic map $g$. Their resulting next token distributions are therefore identical. Substituting the equality into Equation~\eqref{eq:sample_repair} gives zero numerator and repair equal to one.
\end{proof}

\FloatBarrier
\section{Externalization and Fixed State Diagnostics}
\label{app:externalization}

\subsection{Internalization and externalization gap}
\label{app:gap}

Let $\mathcal U_i$ denote the unexternalized task specific derived state at compression boundary $i$. We define
\begin{equation}
G_i
=
\mathbb{I}
\left(
\mathcal U_i\neq\varnothing
\right).
\label{eq:semantic_gap}
\end{equation}
Derived state includes intermediate plans, relations, constraints, syntheses, or task specific conclusions that remain internal to reasoning before materialization in a persistent external carrier.

\begin{table}[htbp]
\centering
\small
\setlength{\tabcolsep}{4.5pt}
\renewcommand{\arraystretch}{1.10}
\caption{Grouping by the internalization and externalization gap.}
\label{tab:externalization_gap}
\begin{tabular}{@{}lrrr@{}}
\toprule
Group & $n$ & Future risk & Mean hidden score \\
\midrule
No gap & 213 & $36.15\%$ & 0.4369 \\
Gap present & 52 & $67.31\%$ & 0.6662 \\
\bottomrule
\end{tabular}
\end{table}

\paragraph{Interpretation.}
Future behavioral risk is higher when the relevant derived state remains available only in reasoning. The hidden score shows the same ordering. These pooled values are descriptive and motivate the within task comparison below.

\begin{table}[htbp]
\centering
\small
\setlength{\tabcolsep}{4.5pt}
\renewcommand{\arraystretch}{1.10}
\caption{Within task contrasts for gap present minus no gap states.}
\label{tab:gap_paired_effects}
\begin{tabular}{@{}lrr@{}}
\toprule
Quantity & Mean difference & Task bootstrap $95\%$ CI \\
\midrule
Future behavioral risk & $+19.75$ pp & [1.00, 37.05] \\
Hidden score & $+0.2831$ & [0.1703, 0.3977] \\
\bottomrule
\end{tabular}
\end{table}

\paragraph{Interpretation.}
The within task contrasts preserve the same direction after comparing states from the same tasks, strengthening the state based interpretation in the main text.

\subsection{Controlled state interventions}
\label{app:controlled_interventions}

\begin{table*}[htbp]
\centering
\small
\setlength{\tabcolsep}{7pt}
\renewcommand{\arraystretch}{1.10}
\caption{Controlled state interventions. The three groups answer different questions about where useful task state is stored.}
\label{tab:controlled_state_interventions}
\begin{tabular*}{\textwidth}{@{\extracolsep{\fill}}lrrr@{}}
\toprule
Intervention & Rule score & Reasoning & New requests \\
\midrule
\rowcolor{groupgray}
\multicolumn{4}{c}{\textbf{Fixed intermediate Office state}} \\
\midrule
Keep & 0.8717 & 1,817 & 5.00 \\
Delete all & 0.8146 & 7,491 & 5.33 \\
Random $5\%$ & 0.8756 & 516 & 4.00 \\
Random $10\%$ & 0.8756 & 562 & 7.33 \\
\midrule
\rowcolor{groupgray}
\multicolumn{4}{c}{\textbf{Position intervention}} \\
\midrule
Keep all & 0.8717 & -- & -- \\
Delete all & 0.6749 & -- & -- \\
Keep latest only & 0.8717 & -- & -- \\
Delete latest only & 0.7598 & -- & -- \\
\midrule
\rowcolor{groupgray}
\multicolumn{4}{c}{\textbf{After code is written to disk}} \\
\midrule
Keep thinking & 0.7587 & 1,289 & -- \\
Clear thinking & 0.7587 & 3,222 & -- \\
\bottomrule
\end{tabular*}
\end{table*}

\paragraph{Interpretation.}
The fixed intermediate state shows that complete deletion can trigger substantially more reasoning while lowering the rule score. The position intervention shows that keeping only the latest reasoning matches the keep all score, whereas deleting the latest reasoning causes a larger drop. After relevant code has been written to disk, clearing subsequent reasoning leaves the rule score unchanged at $0.7587$. Together, these local interventions support the claim that the importance of reasoning depends on whether task relevant state has already been externalized.

\subsection{Environmental feedback intervention}
\label{app:environment_memory}

\begin{table}[htbp]
\centering
\small
\setlength{\tabcolsep}{4.5pt}
\renewcommand{\arraystretch}{1.10}
\caption{Next action after retaining or hiding the already observed \texttt{stderr}.}
\label{tab:stderr_intervention}
\begin{tabular}{@{}lll>{\raggedright\arraybackslash}p{0.28\linewidth}@{}}
\toprule
Feedback & Next action & Count & Subsequent observation \\
\midrule
\texttt{stderr} visible & Edit & 9/9 & No repeated error observation \\
\texttt{stderr} hidden & Bash & 9/9 & The same \texttt{ValueError} appears again \\
\bottomrule
\end{tabular}
\end{table}

\paragraph{Interpretation.}
When the error remains visible, the agent can act directly on the observed state. When the same feedback is hidden, the agent first performs another Bash call and recovers the same error. This provides a concrete example in which environmental feedback functions as an external information carrier.

\FloatBarrier
\section{Motivation Figure Details}
\label{app:motivation}

Figure~\ref{fig:motivation} is used to motivate the state dependent value of reasoning. Panels (a) and (b) compare task score and token usage across different reasoning settings and difficulty levels. Panel (c) summarizes the empirical reward and input reduction tradeoff under online interventions.

\paragraph{Interpretation.}
The figure highlights two motivating patterns. First, the relation between reasoning intensity and task score varies across difficulty levels. Second, the relation between local deletion strength and end to end savings varies across interventions. Appendix~\ref{app:ablations} provides the corresponding intervention results.

\FloatBarrier
\section{Main Text to Appendix Map}
\label{app:evidence_map}

\begin{table*}[htbp]
\centering
\small
\setlength{\tabcolsep}{5pt}
\renewcommand{\arraystretch}{1.10}
\caption{Correspondence between major main text claims and supplementary evidence.}
\label{tab:evidence_map}
\begin{tabular*}{\textwidth}{@{\extracolsep{\fill}}p{0.30\textwidth}p{0.27\textwidth}p{0.32\textwidth}@{}}
\toprule
Main text claim & Supplementary location & Supporting material \\
\midrule
Online compression changes future interaction & Appendix~\ref{app:trajectory_bound} & Finite horizon perturbation theorem \\
Low entropy ranking is an information motivated heuristic & Appendices~\ref{app:proof_single} and~\ref{app:proof_subset} & Conditional information bounds and qualification \\
Local deletion and total computation differ & Appendix~\ref{app:token_decomposition} & Exact trajectory token decomposition \\
Full benchmark result & Appendix~\ref{app:full260} & Domain rewards, token changes, and W/T/L \\
Trajectory amplification & Appendix~\ref{app:ablation_tokens} & Intervention specific input and output reductions \\
Future necessity is decodable & Appendix~\ref{app:probing} & Cohort, nested protocol, and probe metrics \\
Historical reasoning affects representations & Appendix~\ref{app:patching} & Activation patching curve and layerwise results \\
Externalization changes reasoning usefulness & Appendix~\ref{app:externalization} & Gap analysis and controlled state interventions \\
\bottomrule
\end{tabular*}
\end{table*}

\paragraph{Interpretation.}
The table organizes the paper's supporting evidence into system level benchmark results, local mechanistic diagnostics, and mathematical analysis.

\FloatBarrier
\section{Broader Impact and Future Directions}
\label{app:broader_impact}

Reducing repeated processing of agent history lowers active context usage and can improve inference efficiency. ICLR preserves actions, tool calls, and observations while selectively compressing reasoning, keeping externally recorded task state available throughout interaction.

Several aspects of the current study motivate future work. Proxy entropy is computed by a frozen model distinct from the acting agent, motivating tighter alignment between scorer and policy representations. The future necessity target is behavioral, suggesting richer labels for different forms of reasoning reuse. Activation patching characterizes the frozen representation sensor, while future studies can extend the same analysis to additional acting models and longer closed loop interventions. Externalization analysis also motivates explicit modeling of when code, files, tool outputs, and observations become sufficient carriers of task relevant derived state.

Raw trajectory access should respect the confidentiality of prompts, credentials, tool outputs, and generated artifacts.

\FloatBarrier
\section{Task Manifest and Reproducibility Checklist}
\label{app:reproducibility}

\subsection{Task set summary}
\label{app:task_manifest}

\begin{table}[htbp]
\centering
\small
\setlength{\tabcolsep}{5pt}
\renewcommand{\arraystretch}{1.10}
\caption{Task set composition used in the main experiments.}
\label{tab:task_manifest_summary}
\begin{tabular}{@{}lrrrrr@{}}
\toprule
Set & Code & Office & Security & Web & Total \\
\midrule
Full benchmark & 80 & 50 & 60 & 70 & 260 \\
Ablation set & 20 & 20 & 20 & 20 & 80 \\
Pilot40 & \multicolumn{4}{c}{Fixed forty task comparison} & 40 \\
\bottomrule
\end{tabular}
\end{table}

\paragraph{Interpretation.}
The Full260, fixed 80 task ablation set, and Pilot40 comparison provide complementary views of complete benchmark performance, controlled ablation behavior, and context management comparison.

\subsection{Available records}
\label{app:missing_material}

\begin{table}[htbp]
\centering
\small
\setlength{\tabcolsep}{5pt}
\renewcommand{\arraystretch}{1.10}
\caption{Reproducibility checklist for the experiments reported in this paper.}
\label{tab:reproducibility_status}
\begin{tabular}{@{}>{\raggedright\arraybackslash}p{0.28\linewidth}>{\raggedright\arraybackslash}p{0.57\linewidth}@{}}
\toprule
Item & Available material \\
\midrule
Method definition & Block segmentation, proxy entropy, context conditioned scorer, deletion rule, placeholder, and online write back \\
Runtime & Acting model, frozen scorer, scorer context limit, and principal client settings \\
Evaluation & Full260, Pilot40, and fixed 80 task ablation composition and aggregate results \\
Probing & Cohort size, target definition, nested task level protocol, pooling choice, and aggregate metrics \\
Patching & Cohort size, intervention definition, KL Repair, confidence intervals, and layerwise results \\
State interventions & Gap statistics, fixed state rule scores, and environmental feedback intervention \\
Theory & Information bounds, trajectory perturbation bound, externalization sufficiency condition, and token decomposition \\
\bottomrule
\end{tabular}
\end{table}

\paragraph{Interpretation.}
The checklist summarizes the implementation, evaluation, representation analysis, intervention results, and theoretical material documented in the paper and appendix.

\end{document}